\newcounter{lem_counter}
\newcounter{pro_counter}
\newcounter{def_counter}
\newtheorem{proposition}[pro_counter]{Proposition}
\newtheorem{lemma}[lem_counter]{Lemma}
\newtheorem{definition}[def_counter]{Definition}
\begin{document}
%

\title{Stable and Efficient Policy Evaluation}

\author{Daoming Lyu,
        Bo Liu,~\IEEEmembership{Member,~IEEE,}
        Matthieu Geist,
        Wen Dong, \\
        Saad Biaz,~\IEEEmembership{Member,~IEEE,}
        and Qi Wang,~\IEEEmembership{Senior Member,~IEEE}
\thanks{Manuscript received January 08, 2018; revised April 10, 2018 and August 5, 2018; accepted September 2, 2018. The work of B. Liu was supported by Tencent Rhino-bird Gift Funds.(\textit{Corresponding author: Bo Liu.})}

\thanks{D. Lyu, B. Liu, and S. Biaz are with the Department of Computer Science and Software Engineering, Auburn University, Auburn, AL 36849 USA (e-mail: daoming.lyu@auburn.edu; boliu@auburn.edu; biazsaa@auburn.edu).}
\thanks{Matthieu Geist is with Universit\'e de Lorraine, CNRS, LIEC, F-57000 Metz, France, and also with Google Brain, 75009 Paris, France(e-mail: matthieu.geist@univ-lorraine.fr).}
\thanks{Wen Dong is with the Department of Computer Science and Engineering, State University of New York at Buffalo, Buffalo, NY 14260 USA (e-mail: wendong@buffalo.edu).}
\thanks{Qi Wang is with the School of Computer Science and Center for OPTical IMagery Analysis and Learning (OPTIMAL), Northwestern Polytechnical University, Xi'an 710072, China(e-mail: crabwq@nwpu.edu.cn).}}

%
%


\markboth{Journal of IEEE Transactions on Neural Networks and Learning Systems}{}


%



\maketitle

\begin{abstract} 
Policy evaluation algorithms are essential to reinforcement learning due to their ability to predict the performance of a policy. However, there are two long-standing issues lying in this prediction problem that need to be tackled: off-policy stability and on-policy efficiency. The conventional temporal difference (TD) algorithm is known to perform very well in the on-policy setting, yet is not off-policy stable. On the other hand, the gradient TD and emphatic TD algorithms are off-policy stable, but are not on-policy efficient. 
%
This paper introduces novel algorithms that are both off-policy stable and on-policy efficient by using the oblique projection method. The empirical experimental results on various domains validate the effectiveness of the proposed approach.
\end{abstract}

\begin{IEEEkeywords}
Reinforcement Learning, Policy Evaluation,  Temporal Difference Learning, Off-policy.
\end{IEEEkeywords}

%
\IEEEpeerreviewmaketitle


\section{Introduction}
\label{sec:introduction}
\IEEEPARstart{P}{olicy} evaluation plays a crucial role in reinforcement learning ({\bf RL}): it estimates a value function that can predict the long-term return for a given fixed policy. {\em Temporal difference} ({\bf TD}) learning is the central and powerful policy evaluation method in RL. However, it has two fundamental problems. 
The first problem is the \textbf{off-policy stability}. Although TD converges when samples are drawn ``on-policy'' (from the policy to be evaluated),
it is shown to be possibly divergent when samples are drawn ``off-policy''. Off-policy stable methods are of wider interest since they can learn while executing an exploratory policy, learn from demonstrations, and learn multiple tasks in parallel. 
Several different approaches have been explored to address off-policy learning. The ``averager'' method~\cite{gordon1996stable} needs to store many training examples, and thus is not practical for large-scale applications. Off-policy LSTD~\citep{yu2010:icml} is off-policy convergent, but its per-step computational complexity
is quadratic in the number of parameters $d$ of the function approximator. The most state-of-the-art off-policy stable algorithms with linear computational complexity are gradient TD (GTD)~\citep{tdc:2009} and proximal gradient TD (PGTD)~\citep{liu2015uai}, that use stochastic primal-dual based methods as powerful solvers. 
The second problem is the \textbf{on-policy efficiency}. Although GTD and PGTD are off-policy stable, they usually tend to have inferior performances in on-policy learning settings, especially in small-scale problems with relatively few samples~\citep{pgtd:adam2016}. On the other hand, the TD method is well-known for its on-policy efficiency, which explains well its popularity among reinforcement learning researchers and practitioners.
It is intriguing, therefore, to propose model-free policy evaluation algorithms that offer both off-policy stability and on-policy efficiency.

The major contribution of this paper is to explore policy evaluation algorithms that yield both off-policy stability and on-policy efficiency. To this end, we propose novel algorithms based on the oblique projection framework~\citep{Scherrer:ObliqueProjection}. A computationally feasible criterion is proposed and used to derive algorithms with linear computational complexity per step. The off-policy stability is rigorously proved, and the on-policy and off-policy performances are demonstrated via thorough experimental studies.

Here is a roadmap for the rest of the paper. Section~\ref{sec:preliminary} introduces some RL background, reviews existing approaches to tackle the problem of off-policy stability and puts off-policy policy evaluation in the framework of (weighted) oblique projection. Section~\ref{sec:alg} provides the stable and efficient TD (SETD) algorithm and a more general SETD($\lambda$) algorithm using weighted oblique projection. 
Related works are discussed in Section~\ref{sec:related} and compared empirically to the proposed SETD in Section~\ref{sec:experimental}.


\section{Preliminary}
\label{sec:preliminary}



\subsection{Reinforcement Learning}
\label{sec:Rinforcement Learning}
Reinforcement learning~\citep{sutton-barto:book, tnnls:kiumarsi:2018} and approximate dynamic programming~\citep{tnnls:liu:2014policy,tnnls:he:2017:survey}  is a class of learning problems in which an agent interacts with an unfamiliar, dynamical, and stochastic environment, where the agent's goal is to optimize some measure of its long-term performance. This interaction is conventionally modeled as a Markov decision process (\textbf{MDP}). An MDP is defined as the tuple $({\mathcal{S},\mathcal{A},P_{ss'}^{a},R,\gamma})$, where $\mathcal{S}$ and $\mathcal{A}$ are finite sets of states and actions, the transition kernel $P_{ss'}^{a}$ specifies the probability of transition from state $s\in\mathcal{S}$ to state $s'\in\mathcal{S}$ by taking action $a\in\mathcal{A}$, $R(s,a):\mathcal{S}\times\mathcal{A}\to\mathbb{R}$ is the reward function bounded by $R_{\max}$, and $0\leq\gamma<1$ is a discount factor. A stationary policy $\pi:\mathcal{S}\times\mathcal{A}\to\left[{0,1}\right]$ is a probabilistic mapping from states to actions. The main objective of an RL algorithm is to find an optimal policy. In order to achieve this goal, a key step in many algorithms is to estimate the value function under a given policy $\pi$, i.e.,~$V_{\pi}:\mathcal{S}\to\mathbb{R}$, a process known as {\em policy evaluation}. It is known that $V_\pi$ is the unique fixed-point of the {\em Bellman operator} $T_\pi$, i.e.,
\begin{align}
\label{eq:BellmanEq}
V_\pi = T_\pi V_\pi = R_\pi + \gamma P_\pi V_\pi,
\end{align}
where $R_\pi$ and $P_\pi$ are respectively the reward function and transition kernel of the Markov chain induced by policy $\pi$. In Eq.~\eqref{eq:BellmanEq}, we may think of $V_\pi$ as an $|\mathcal{S}|$-dimensional vector and write everything in vector/matrix form.
In the following, to simplify the notation, we often drop the dependence of $T_\pi$, $V_\pi$, $R_\pi$, and $P_\pi$ to $\pi$. 
We denote by $\pi_b$, the behavior policy that generates the data, and by $\pi$, the target policy that we would like to evaluate. They are the same in the on-policy setting and different in the off-policy scenario. For the $i$-th state-action pair $(s_i,a_i)$, such that $\pi_b(a_i|s_i)>0$, we define the importance-weighting factor $\rho_i = \pi(a_i|s_i)/\pi _b(a_i|s_i)$.

When $\mathcal{S}$ is large or infinite, we often use a linear approximation architecture for $V_\pi$ with parameters $\theta\in\mathbb{R}^d$ and $K$-bounded basis functions $\{\varphi_i\}_{i=1}^d$, i.e.,~$\varphi_i:\mathcal{S}\rightarrow\mathbb{R}$ and $\max_i||\varphi_i||_\infty\leq K$. We denote by $\phi(\cdot) := \big(\varphi_1(\cdot),\ldots,\varphi_d(\cdot)\big)^\top$ the feature vector and by $\mathcal{F}$ the linear function space spanned by the basis functions $\{\varphi_i\}_{i=1}^d$, i.e.,~$\mathcal{F}=\big\{f_\theta\mid\theta\in\mathbb{R}^d\;\text{and}\;f_\theta(\cdot)=\phi(\cdot)^\top\theta\big\}$. We may write the approximation of $V$ in $\mathcal{F}$ in the vector form as $\hat{v}=\Phi\theta$, where $\Phi$ is the $|\mathcal{S}|\times d$ feature matrix. 
$\xi \in \mathbb{R}^{|\mathcal{S}|}$ denotes the vector representing the stationary probability distribution over the state space $\mathcal{S}$ and depends on behavior policy $\pi_b$. We also denote by $\Xi \in \mathbb{R}^{|\mathcal{S}|\times |\mathcal{S}|}$, the diagonal matrix whose elements are $\xi(s)$.
The solution of the TD algorithm is the fixed-point solution of the following \textit{projected Bellman equation}:
\begin{align}
\hat v = \Pi {T_\pi }(\hat v),
\label{eq:tdequation}
\end{align}
where $\Pi  = \Phi {({\Phi ^ \top }\Xi \Phi )^{ - 1}}{\Phi ^ \top }\Xi $ is the weighted least-squares projection weighted by $\xi$. 

When only $n$ training samples (collected by the behavior policy $\pi_b$) are available, the sample set is denoted as $\mathcal{D}=\big\{\big(s_i,a_i,r_i=r(s_i,a_i),s'_i\big)\big\}_{i=1}^n,\;s_i\sim\xi,\;a_i\sim\pi_b(\cdot|s_i),\;s'_i\sim P(\cdot|s_i,a_i)$.  
We denote by $\delta_i(\theta) := r_i+\gamma\phi_i^{'\top}\theta-\phi_i^\top\theta$, the TD error for the $i$-th sample $(s_i,a_i,r_i,s'_i)$ and define $\Delta\phi_i=\phi_i-\gamma\phi'_i$, where $\phi_i$ (resp. $\phi_i'$) is the $i$-th feature vector w.r.t. $s_i$ (resp. $s_i'$). 
Finally, we define the covariance matrix $C$ as $C := \mathbb{E}[\phi_i\phi_i^\top] = \Phi^\top \Xi \Phi$, where the expectations are w.r.t.~$\xi$. For the $i$-th sample in the training set $\mathcal{D}$, an unbiased estimate of $C$ is $\hat{C}_i := \phi_i\phi_i^\top$.

\subsection{Oblique Projection}
\label{sec:problem}

This section introduces the oblique projection~\citep{saad:book}, the oblique projected TD methods~\citep{Scherrer:ObliqueProjection}, and then extend it to weighted oblique projected TD framework. 
The oblique projection tuple ($\Phi, X$) is
defined as follows, 
where the rows of $\Phi$ are the basis vectors for the range of the projection and the rows of $X$ are the basis vectors for the orthogonal complement of the null space of the projection.

\begin{definition}
The \textit{Oblique Projection} operator $\Pi _\Phi ^X$, 
\begin{align}
\Pi _\Phi ^X = \Phi {({X^ \top }\Phi )^{ - 1}}{X^ \top },
\end{align}
is a projection onto $span(\Phi)$ orthogonal to $span(X)$. 
\end{definition}
$\Pi^X_\Phi$ is a projection since it is idempotent: ${(\Pi _\Phi ^X)^2} = \Pi _\Phi ^X$. This projection reduces to an orthogonal projection when the basis vectors for the range are orthogonal to the null space, and is more general than the orthogonal projection. For example, the weighted least-squares projection $\Pi$ in Eq.~\eqref{eq:tdequation} can be formulated as $\Pi=\Pi^{\Xi\Phi}_\Phi$, which defines the oblique projection onto the space spanned by $\Phi$ with basis $\{\phi(s): s\in \mathcal S\}$ that is orthogonal to the space spanned by $\Xi\Phi$ with basis $\{\xi(s)\phi(s): s\in\mathcal S\}$.  


Next we introduce the oblique projected TD as a more general framework to include the TD method and the residual gradient (RG) method~\citep{Baird:ResidualAlgorithms1995}. Motivated by the extension from $\Pi$ to $\Pi^X_\Phi$, it is natural to extend the projected fixed-point equation~\eqref{eq:tdequation} with oblique projection:
\begin{align}
\hat v = \Pi _\Phi ^X{T_\pi }(\hat v).
\label{eq:obfp}
\end{align}
Figure~\ref{fig:oblique} illustrates this. Instead of minimizing the distance between $\Pi T(\hat{v})$ and $\hat{v}$, the oblique projected TD aims to minimize the distance between $\Pi^X_\Phi T(\hat{v})$ and $\hat{v}$.
\begin{figure}[h]
\centering
\includegraphics[width=.6\linewidth]{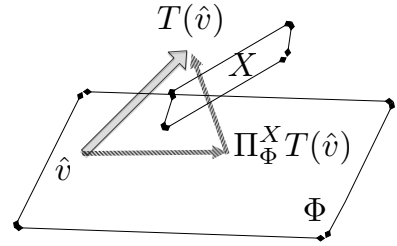}
\caption{An Illustration of Oblique Projected TD}
\label{fig:oblique}
\end{figure}

An intuitive question to ask is what the best oblique projection matrix $X$ is. Is it TD, RG,  some interpolation between them, or none of the above? To answer this, we present the following lemma, a workhorse of this paper. 
\begin{lemma}[Best projection~\citep{Scherrer:ObliqueProjection}]
\label{lem:Xbest}
Given $\Phi$, if $V_\pi$ does not lie in $span(\Phi)$, the ``best'' approximation is 
$$
v^* = \Pi V_\pi = \Phi {({\Phi ^ \top }\Xi \Phi )^{ - 1}}{\Phi ^ \top }\Xi V_\pi ,
$$ which is also the solution of the oblique projected TD equation ${v^*} = \Pi _\Phi ^{{X^*}}T{v^*}$ with 
\begin{align}
{X^*} = {({L_{\pi} ^\top })^{ - 1}}\Xi \Phi,
\label{eq:xoptimal}
\end{align}
with $L_\pi := I - \gamma {P_\pi }$.
\end{lemma}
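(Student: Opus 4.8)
The plan is to verify directly that $v^* = \Pi V_\pi$ satisfies the oblique projected Bellman equation $v^* = \Pi_\Phi^{X^*} T v^*$ with the claimed $X^*$, and then invoke uniqueness of the fixed point. First I would write $T v^* = R + \gamma P v^*$, so that $v^* - T v^* = (I - \gamma P) v^* - R = L_\pi v^* - R$. Since $v^* = \Pi V_\pi \in \mathrm{span}(\Phi)$, I can write $v^* = \Phi \theta^*$ for some $\theta^*$, and the fixed-point condition $\Phi\theta^* = \Pi_\Phi^{X^*} T v^* = \Phi ((X^*{}^\top\Phi)^{-1} X^*{}^\top) T v^*$ holds if and only if $(X^*{}^\top \Phi)\theta^* = X^*{}^\top T v^*$, i.e. $X^*{}^\top(v^* - T v^*) = 0$, i.e. $X^*{}^\top(L_\pi v^* - R) = 0$.

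Next I would substitute the candidate $X^* = (L_\pi^\top)^{-1}\Xi\Phi$, so that $X^*{}^\top = \Phi^\top \Xi L_\pi^{-1}$ (using $((L_\pi^\top)^{-1})^\top = (L_\pi^\top{}^\top)^{-1} = L_\pi^{-1}$). The condition becomes $\Phi^\top \Xi L_\pi^{-1}(L_\pi v^* - R) = \Phi^\top\Xi(v^* - L_\pi^{-1} R) = \Phi^\top\Xi(v^* - V_\pi) = 0$, where I use that $V_\pi = L_\pi^{-1}R$ is exactly the statement of the Bellman equation $V_\pi = R + \gamma P V_\pi$. But $\Phi^\top\Xi(v^* - V_\pi) = 0$ is precisely the normal equation characterizing $v^* = \Pi V_\pi$ as the $\Xi$-weighted orthogonal projection of $V_\pi$ onto $\mathrm{span}(\Phi)$, so it holds by definition of $v^*$. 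This shows $v^*$ solves the oblique projected TD equation for this $X^*$.

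To finish, I would note that $L_\pi = I - \gamma P_\pi$ is invertible (since $\gamma < 1$ and $P_\pi$ is stochastic, its spectral radius is at most $\gamma < 1$), so $X^*$ is well-defined, and I should check that $X^*{}^\top\Phi = \Phi^\top\Xi L_\pi^{-1}\Phi$ is nonsingular so that $\Pi_\Phi^{X^*}$ is a genuine projection; this is where the hypothesis structure matters, and it can be argued from $\Phi$ having full column rank together with $\Xi L_\pi^{-1}$ being nonsingular, or simply assumed as part of the standing nondegeneracy conditions as in \citep{Scherrer:ObliqueProjection}. Uniqueness of the solution to $v = \Pi_\Phi^{X^*} T v$ then identifies it with $v^*$; the affine map $v \mapsto \Pi_\Phi^{X^*}(R + \gamma P v)$ restricted to $\mathrm{span}(\Phi)$ has a unique fixed point whenever $I - \gamma \Pi_\Phi^{X^*} P$ is invertible on that subspace, which follows from the same nonsingularity of $X^*{}^\top\Phi$.

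The main obstacle I anticipate is not the algebra — which collapses cleanly once one spots that $X^*{}^\top = \Phi^\top\Xi L_\pi^{-1}$ turns the oblique orthogonality condition into the ordinary projection normal equation — but rather the bookkeeping around invertibility: ensuring $L_\pi$, $X^*{}^\top\Phi$, and $I - \gamma\Pi_\Phi^{X^*}P$ are all nonsingular under the stated hypotheses (full column rank of $\Phi$, $\gamma<1$, positivity of $\xi$), so that every inverse written down is legitimate and the fixed point is unique.
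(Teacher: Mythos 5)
Your argument is correct, but it proceeds differently from the paper. The paper does not verify the fixed point directly: it invokes the representation result of \citep{Scherrer:ObliqueProjection} that any solution of $\hat v = \Pi_\Phi^X T_\pi(\hat v)$ can be written as an oblique projection of the true value function, $\hat v = \Pi_\Phi^{L_\pi^\top X} V$, and then simply matches this projection with $\Pi = \Pi_\Phi^{\Xi\Phi}$, reading off $L_\pi^\top X^* = \Xi\Phi$ and hence $X^* = (L_\pi^\top)^{-1}\Xi\Phi$. Your route is more elementary and self-contained: you take the candidate $X^*$, reduce the oblique fixed-point condition to the orthogonality statement $X^{*\top}(v^* - T v^*) = 0$, and observe that $X^{*\top} = \Phi^\top\Xi L_\pi^{-1}$ together with $V_\pi = L_\pi^{-1}R$ collapses this to the normal equation $\Phi^\top\Xi(v^* - V_\pi) = 0$ defining $\Pi V_\pi$. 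What the paper's approach buys is that it \emph{derives} $X^*$ (up to right-multiplication by a nonsingular matrix, which leaves the projection unchanged) rather than merely verifying it, at the cost of leaning on the cited lemma; what your approach buys is independence from that external result, at the cost of having to guess $X^*$ in advance. One small correction to your uniqueness bookkeeping: uniqueness of the fixed point in $\mathrm{span}(\Phi)$ hinges on the nonsingularity of $X^{*\top}L_\pi\Phi$ rather than of $X^{*\top}\Phi$ alone, and for your candidate this matrix simplifies to
\begin{align}
X^{*\top}L_\pi\Phi = \Phi^\top\Xi L_\pi^{-1}L_\pi\Phi = \Phi^\top\Xi\Phi = C,
\end{align}
which is automatically invertible when $\Phi$ has full column rank and $\xi>0$; so that part of your nondegeneracy checklist comes for free, while the invertibility of $X^{*\top}\Phi = \Phi^\top\Xi L_\pi^{-1}\Phi$ (needed for $\Pi_\Phi^{X^*}$ to be well defined) is indeed a genuine standing assumption, which the paper also leaves implicit.
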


\begin{proof}
The solution of the oblique projected fixed-point equation $\hat v = \Pi _\Phi ^XT(\hat v)$ w.r.t. the oblique projection $\Pi _\Phi ^X$ can be represented as the oblique projection $\Pi_\Phi ^{{L_{\pi} ^\top}X}$ of the true value function $V$~\citep{Scherrer:ObliqueProjection}, that is 
\begin{align}
\hat v = \Pi _\Phi ^X T_\pi(\hat v) = \Pi _\Phi ^{{L_{\pi} ^\top}X}V.
\end{align}
As $X^*$ satisfies ${v^*} = \Pi _\Phi ^{({L_{\pi } ^\top}){X^*}}V$. Let $\Pi _\Phi ^{({L_{\pi }  ^\top }){X^*}} = \Pi $, we have
$
({L_{\pi}  ^\top}){X^*} = \Xi \Phi,
$
and thus we can have Eq.~\eqref{eq:xoptimal}, which completes the proof.
\end{proof}

\subsection{Weighted Oblique Projection}
\label{sec:weightedOP}
The analytical formulation of $X^*$ is often intractable to compute in real applications. 
The major reason is that $P_\pi$ and consequently $(L^\top_\pi)^{-1}$ in $X^*$ are not known in the RL setting. To address this challenge, we introduce the weighted oblique projection matrix $Y=\Xi^{-1} X$ and derive  a stochastic approximation of $X^*$ subject to a structural simplification assumption. From the definition of $Y$, it is evident that its optimum is attained at 
$$
Y^*=\Xi^{-1} X^*=\Xi^{-1} (L^\top_\pi)^{-1}\Xi\Phi ,
$$ and the fixed point equation formulation in Eq.~\eqref{eq:obfp} becomes $\hat{v}_\theta = \Pi_\Phi^{\Xi Y} T_\pi \hat{v}_\theta$ accordingly. 
It turns out that both TD and RG solutions are weighted oblique projections with ${Y_{TD}} = \Phi$ for TD, ${Y_{RG}} = L_\pi \Phi$ for RG. 
%
Next, we discuss the necessary conditions of the existence of the fixed-point solution.
\begin{lemma}[Existence]
The solution to the \textit{weighted oblique projected Bellman equation}
\begin{align}
\hat{v}_\theta = \Pi_\Phi^{\Xi Y} T_\pi \hat{v}_\theta
\label{eq:wopTD}
\end{align} exists if ${Y^ \top }\Xi \Phi $ and ${Y^ \top }\Xi {L_\pi }\Phi$ are non-singular, and the solution is
\begin{align}
\theta  = {({Y^\top}\Xi {L_\pi }\Phi )^{ - 1}}{Y^\top}\Xi R.
\label{eq:existence}
\end{align}
\end{lemma}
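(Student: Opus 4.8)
The plan is to substitute the linear parametrization $\hat v_\theta = \Phi\theta$ and the Bellman operator $T_\pi\hat v_\theta = R + \gamma P_\pi\Phi\theta$ directly into Eq.~\eqref{eq:wopTD}, expand the oblique projection in closed form, and reduce the resulting identity in $\mathbb{R}^{|\mathcal{S}|}$ to a linear system in $\mathbb{R}^d$ that can be solved explicitly for $\theta$.

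First I would write out $\Pi_\Phi^{\Xi Y} = \Phi(Y^\top\Xi\Phi)^{-1}Y^\top\Xi$, using that $\Xi$ is a symmetric (diagonal) matrix so $(\Xi Y)^\top = Y^\top\Xi$; the assumed non-singularity of $Y^\top\Xi\Phi$ is precisely what makes this expression well-defined (and, incidentally, forces $\Phi$ to have full column rank, since $d = \mathrm{rank}(Y^\top\Xi\Phi)\le\mathrm{rank}(\Phi)\le d$). Plugging in, Eq.~\eqref{eq:wopTD} reads $\Phi\theta = \Phi(Y^\top\Xi\Phi)^{-1}Y^\top\Xi\big(R+\gamma P_\pi\Phi\theta\big)$. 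Left-multiplying both sides by the coordinate map $(Y^\top\Xi\Phi)^{-1}Y^\top\Xi$ (equivalently, cancelling the leading $\Phi$ by full column rank) collapses this to the $d$-dimensional equation $(Y^\top\Xi\Phi)\theta = Y^\top\Xi R + \gamma Y^\top\Xi P_\pi\Phi\theta$.

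Next I would move the $\theta$-terms to one side and use the definition $L_\pi = I - \gamma P_\pi$ to obtain $Y^\top\Xi L_\pi\Phi\,\theta = Y^\top\Xi R$. Since $Y^\top\Xi L_\pi\Phi$ is non-singular by hypothesis, this linear system admits the unique solution $\theta = (Y^\top\Xi L_\pi\Phi)^{-1}Y^\top\Xi R$, which is Eq.~\eqref{eq:existence}. For the converse direction, I would verify that this $\theta$ indeed satisfies the fixed-point equation: starting from $Y^\top\Xi L_\pi\Phi\,\theta = Y^\top\Xi R$, rearrange to $Y^\top\Xi\Phi\,\theta = Y^\top\Xi(R+\gamma P_\pi\Phi\theta)$ and left-multiply by $\Phi(Y^\top\Xi\Phi)^{-1}$ to recover $\Phi\theta = \Pi_\Phi^{\Xi Y}T_\pi\hat v_\theta$; this step needs only the invertibility of $Y^\top\Xi\Phi$, not full column rank. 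This establishes existence (in fact uniqueness) and the closed form simultaneously.

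I do not anticipate a genuine obstacle here — the argument is essentially bookkeeping with the oblique projection formula. The one point that warrants a sentence of care is the passage from the $|\mathcal{S}|$-dimensional identity to the $d$-dimensional one: one must justify cancelling the leading $\Phi$, either by noting that invertibility of $Y^\top\Xi\Phi$ implies $\mathrm{rank}(\Phi)=d$ (so $\Phi\theta_1=\Phi\theta_2\Rightarrow\theta_1=\theta_2$), or by simply applying the left inverse $(Y^\top\Xi\Phi)^{-1}Y^\top\Xi$ of $\Phi$ on both sides. Everything else is routine matrix manipulation.
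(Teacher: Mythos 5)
Your proposal is correct and follows essentially the same route as the paper's proof: collapse the fixed-point equation $\hat v_\theta = \Pi_\Phi^{\Xi Y} T_\pi \hat v_\theta$ to the $d$-dimensional linear system $Y^\top \Xi L_\pi \Phi\,\theta = Y^\top \Xi R$ by applying $Y^\top\Xi$ (equivalently the coordinate map $(Y^\top\Xi\Phi)^{-1}Y^\top\Xi$), then invert $Y^\top\Xi L_\pi\Phi$. Your explicit justification of the cancellation of the leading $\Phi$ and your verification of the converse direction are slightly more careful than the paper's chain of equivalences, but the underlying argument is the same.
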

\begin{proof}
\begin{align*}
  \hat{v}_\theta &= \Pi_\Phi^{\Xi Y} T_\pi \hat{v}_\theta
  \\
  \Leftrightarrow
  {Y^ \top }\Xi {\hat{v}_\theta } &= {Y^ \top }\Xi (R + \gamma {P_\pi }{\hat{v}_\theta })
  \\
  \Leftrightarrow
  Y^\top \Xi L_\pi \Phi \theta &= Y^\top \Xi R
  \\
  \Leftrightarrow
\theta  &= {({Y^\top}\Xi {L_\pi }\Phi )^{ - 1}}{Y^\top}\Xi R.
\end{align*}
The first equality holds if ${Y^ \top }\Xi \Phi$ is non-singular, and the last equality holds if ${Y^ \top }\Xi {L_\pi }\Phi$ is non-singular.
\end{proof}

Therefore, the non-singularity of ${Y^ \top }\Xi \Phi $ and ${Y^ \top }\Xi {L_\pi }\Phi$ guarantees the existence of $\Pi^{\Xi Y}_\Phi$ and $\theta$ as in Eq.~\eqref{eq:existence}. The extension from oblique projection to weighted oblique projection, though technically trivial, enables the design of stochastic approximation-based algorithms.

\section{Algorithm Design}
\label{sec:alg}

This section presents the design of the stable and efficient algorithm. We first present the motivation to use the oblique projection. Then, a computationally efficient criterion is proposed to overcome the computational intractability to compute $X^*$. Based on this criterion, an algorithm is proposed based on a diagonal approximation and is also extended to the multi-step learning setting with eligibility trace.

\subsection{Motivation}
 
This paper aims at achieving off-policy stability for TD learning in off-policy settings.
It is well-known that the TD method with linear function approximation has instability issues in off-policy learning settings~\citep{sutton-barto:book,tsitsiklis-roy:tdfun}, which is largely due to the limitation of the projected fixed-point formulation in Eq.~\eqref{eq:tdequation}.
TD solution, as a projected fixed-point formulation, is highly sensitive to the degree of ``off-policyness'', i.e., the difference between the behavior policy $\pi_b$ and the target policy $\pi$. 
On the other hand, $\Pi V$, being the ``best'' approximation (by the representation space $span(\Phi)$) of the true value function $V$, is always unique and stable, yet is difficult to compute in reinforcement learning settings. 
It is therefore desirable to propose a novel fixed-point formulation whose solution is close to the best approximation $\Pi V$ to enable off-policy stability. One possible way to achieve this is to use the weighted oblique projection operator $\Pi_\Phi^{\Xi Y}$ and change the vanilla projected Bellman formulation in Eq.~\eqref{eq:tdequation} to the weighted oblique projected Bellman formulation in Eq.~\eqref{eq:wopTD}.
Closeness to $\Pi V$ implies that the solution is less sensitive to the ``off-policyness'' than the TD solution. In a nutshell, this paper aims at proposing a weighted oblique projected TD framework in Eq.~\eqref{eq:wopTD} to achieve off-policy stability via forcing proximity to the ``best'' approximation $\Pi V$, with stochastic approximation methods.

\subsection{Approximation Criteria}
We first introduce a simple but important property of the optimal projection matrix  $X^*$. We denote $\Lambda := L_\pi \Phi$. As $X^* = {(L_{\pi} ^\top)^{-1} }\Xi \Phi $, we have
\begin{align}
{\Lambda ^ \top }{X^*} = {\Phi ^ \top }({L_{\pi } ^\top}){({L_{\pi } ^\top})^{ - 1}}\Xi \Phi  = {\Phi ^ \top }\Xi \Phi  = C.
\label{eq:p1}
\end{align} 

Motivated by this, Proposition~\ref{pro:fundamental} is presented to formulate the cornerstone of this paper. 
\begin{proposition}
If the weighted oblique projection $Y$ satisfies ${Y^ \top }\Xi L_\pi \Phi  = C$,
and if $\Phi$ has full row rank (${{rank}}(\Phi) = |\mathcal{S}|$), then we have $Y = Y^*$. 
\label{pro:fundamental}
\end{proposition}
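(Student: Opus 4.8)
The plan is to recast the hypothesis ${Y^\top}\Xi L_\pi\Phi = C$ so that $Y-Y^*$ lands in the kernel of an injective linear map built from $\Phi$, and then to conclude columnwise. The two facts I would lean on are Eq.~\eqref{eq:p1} (the identity ${\Lambda^\top}{X^*}=C$) and the standing nondegeneracy of $\Xi$ and $L_\pi$.

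First I would show that $Y^*$ itself satisfies the hypothesized identity. Writing $X^* = \Xi Y^*$ and $\Lambda = L_\pi\Phi$, Eq.~\eqref{eq:p1} reads $\Lambda^\top X^* = \Phi^\top L_\pi^\top \Xi Y^* = C$. On the other hand, transposing the hypothesis and using that $\Xi$ and $C=\Phi^\top\Xi\Phi$ are symmetric gives $\Phi^\top L_\pi^\top \Xi Y = C$ as well. Subtracting the two identities,
\[
\Phi^\top L_\pi^\top \Xi\,(Y - Y^*) = 0 .
\]
Next I would read this $d\times d$ matrix identity one column at a time. Let $M := Y - Y^* \in \mathbb{R}^{|\mathcal{S}|\times d}$ and let $m$ be an arbitrary column of $M$, so that $\Phi^\top L_\pi^\top \Xi m = 0$. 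Here $\Xi$ is invertible (it is the diagonal matrix of a full-support stationary distribution) and $L_\pi = I - \gamma P_\pi$ is invertible (since $\gamma<1$ and $P_\pi$ is stochastic), hence $w := L_\pi^\top\Xi m$ obeys $\Phi^\top w = 0$. The assumption $\mathrm{rank}(\Phi) = |\mathcal{S}|$ makes $\Phi^\top\in\mathbb{R}^{d\times|\mathcal{S}|}$ injective, so $w = 0$, and then invertibility of $L_\pi^\top$ and $\Xi$ forces $m = 0$. Since $m$ was an arbitrary column, $M = 0$, i.e.\ $Y = Y^*$.

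The one delicate point is the jump from a $d\times d$ equation to the $|\mathcal{S}|\times d$ conclusion $Y = Y^*$: one must resist cancelling $\Xi L_\pi\Phi$ on the right (it has only $d$ columns and need not be right-invertible), and instead transpose and use that $\Phi^\top$ is injective — which is exactly what the full-row-rank hypothesis buys. Everything else is routine bookkeeping together with the invertibility of $\Xi$ and of $L_\pi$; I would state these two nondegeneracy assumptions explicitly so the argument is self-contained.
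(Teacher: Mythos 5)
Your proof is correct and follows essentially the same route as the paper's: transpose the hypothesis, use the full-row-rank assumption to make $\Phi^\top$ injective (cancel it), and then invert $L_\pi^\top$ and $\Xi$. The only cosmetic difference is that you establish uniqueness by subtracting the identity ${\Lambda^\top}X^*=C$ of Eq.~\eqref{eq:p1} satisfied by $Y^*$, whereas the paper solves the chain of equivalences directly to arrive at $\Xi Y = X^*$, i.e.\ $Y=Y^*$.
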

\begin{proof}
\begin{align}
{Y^ \top }\Xi L_\pi \Phi  &= C \\
\Leftrightarrow
{Y^ \top }\Xi L_\pi \Phi  &= {\Phi ^ \top }\Xi \Phi  \\
\Leftrightarrow
{L_\pi^\top }\Xi  Y &= \Xi \Phi \text{ (as ${{rank}}(\Phi) = |\mathcal{S}|$)}
\\
\Leftrightarrow
\Xi  Y &= {({L_\pi ^\top })^{ - 1}}\Xi \Phi  = {X^*}
\\
\Leftrightarrow
Y^* &= Y
\end{align}
\end{proof}

Although the rank condition is restrictive in real applications, it still offers helpful directions to approximate $X^*$ in a computationally efficient way.

\subsection{SETD Algorithm Design}

In this paper, we investigate a special type of weighted oblique projection: $Y$ can be decomposed into the product of a $|\mathcal{S}| \times |\mathcal{S}|$ diagonal matrix $\Omega$ and $\Phi$ such that $Y=\Omega\Phi$.
The optimal $\Omega$, termed as $\Omega^*$, can be obtained via
\begin{align}
\Omega^*  = \arg \mathop {\min }\limits_\Omega ||\Xi \Omega \Phi  - X^{*}||_F,
\label{eq:w_opt}
\end{align}
which is impossible to compute since $X^*$ is unknown.
With Eq.~\eqref{eq:p1}, an approximation of $\Omega^*$, termed as $\Omega_S$, can be computed as:
\begin{align}
\Omega_S  = \arg \mathop {\min }\limits_\Omega  ||{\Lambda ^ \top }\Xi \Omega \Phi  - C|{|_F}.
\label{eq:batch}
\end{align}
The optimization problem reduces to a matrix regression problem, with $||\cdot||_F$ the Frobenius norm.
With the sample-based estimation of ${{ \Lambda}^ \top }\Xi \Omega_S \Phi $ and $C$ matrices, i.e, 
\begin{align}
{{ \Lambda}^ \top }\Xi \Omega_S \Phi & \leftarrow \frac{1}{n}\sum\limits_{i = 1}^n {{\omega _i}{(\phi_i - \gamma \phi_i')}}{\phi_i ^ \top }  \\
C & \leftarrow \frac{1}{n} \sum\limits_{i = 1}^n {{\phi _i}{\phi_i ^\top}} ,
\end{align}
with $\omega _i$ the $i^\text{th}$ diagonal element of $\Omega_S$, the problem is formulated as
$$
\mathop {\min }\limits_{\omega_i}  \frac{1}{n}||\sum\limits_{i = 1}^n {({\omega _i}{\Delta \phi _i} \phi _i^ \top  - {\phi _i}\phi _i^ \top )||_F} .
$$
So Eq.~\eqref{eq:batch} can be approximated as 
\begin{align}
\Omega_S  
&\approx \arg \mathop {\min }\limits_{\omega_i}  \frac{1}{n}||\sum\limits_{i = 1}^n {({\omega _i}{\Delta \phi _i} \phi _i^ \top  - {\phi _i}\phi _i^ \top )||_F} .
\label{eq:batch2}
\end{align}

Now, we propose a relaxed method to address this problem based on two observations. First, it is desirable that $\forall i$,   $\Omega_S ({s_i},{s_i})$ be positive. This is intuitive.
Secondly, instead of solving the above objective function, a relaxed sample-separable objective function $\Omega_S$ using the triangle inequality can be formulated as follows by denoting ${\omega _i}: = \Omega_S ({s_i},{s_i})$,
\begin{align}
\forall i,  {\omega _i} = \arg \mathop {\min }\limits_\omega  || \omega \Delta {\phi _{i}}\phi _i^ \top  - {\phi _{i}}\phi _i^ \top||_F, \quad\rm{s.t.}\quad \omega _i \ge 0 .
\label{eq:sto2}
\end{align}
The closed-form solution of $\omega_i$ is 
\begin{align}
\omega_i = \max\bigg(\frac{{\Delta \phi _{i}^ \top {\phi _{i}}}}{{||\Delta {\phi _{i}}|{|^2}}},0 \bigg),
\label{eq:omega-i}
\end{align}
where $||\cdot||$ is the $\ell_2$-norm of a vector. 

Here we show the detailed deduction. To obtain Eq.~\eqref{eq:omega-i}, we first introduce the following lemmas to compute the singular value of rank-$1$ matrices.
We first introduce Lemma~\ref{lem:rank1} without proof, which is instrumental in the theoretical proof. 

\begin{lemma}
A rank-$1$ real-valued square matrix $G=pq^\top$ where $p,q$ are vectors of the same length, the eigenvalues of $G$ are 
\begin{align}
\lambda (G) = \{ {p^\top}q,0,0,0, \cdots \},
\end{align}
i.e., $G$ has only one nonzero eigenvalue ${p^\top}q$, and all other eigenvalues are $0$, and 
${{Tr}}(G) = {p^\top}q$, where ${{Tr}}(\cdot)$ is the trace of a matrix.
\label{lem:rank1}
\end{lemma}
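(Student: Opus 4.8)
The plan is to treat $G = pq^\top$ as an $n\times n$ matrix, where $n$ is the common length of the vectors $p$ and $q$, establish the trace identity first, and then pin down the spectrum using the rank constraint.

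First I would compute the trace directly: $\mathrm{Tr}(G) = \sum_{i=1}^{n}(pq^\top)_{ii} = \sum_{i=1}^{n} p_i q_i = p^\top q$; equivalently this follows from the cyclic invariance of the trace, $\mathrm{Tr}(pq^\top) = \mathrm{Tr}(q^\top p) = q^\top p = p^\top q$.

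Next, for the eigenvalues I would use $\mathrm{rank}(G)\le 1$, so that the null space of $G$ has dimension at least $n-1$; concretely, every vector $v$ with $q^\top v = 0$ satisfies $Gv = p(q^\top v) = 0$, so $0$ is an eigenvalue of geometric, hence algebraic, multiplicity at least $n-1$. Since the algebraic multiplicities of the eigenvalues sum to $n$ and their multiplicity-weighted sum equals $\mathrm{Tr}(G) = p^\top q$, the remaining eigenvalue must be $p^\top q$. Alternatively, one can make this fully explicit when $p^\top q\neq 0$: from $Gp = p(q^\top p) = (p^\top q)p$, the vector $p$ is an eigenvector with eigenvalue $p^\top q$, and together with a basis of $\{q\}^\perp$ this accounts for all of $\mathbb{R}^n$, giving the eigenvalue list $\{p^\top q,0,\ldots,0\}$. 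A third route is the matrix determinant lemma, which yields the characteristic polynomial $\det(\lambda I - pq^\top) = \lambda^{n-1}(\lambda - p^\top q)$ directly.

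I do not anticipate any substantive obstacle, since this is a standard fact about rank-one matrices; the only point requiring a little care is the degenerate case $p^\top q = 0$, where $G$ is nilpotent ($G^2 = p(q^\top p)q^\top = 0$) and not diagonalizable, so the explicit-eigenvector argument no longer spans $\mathbb{R}^n$. In that case I would fall back on the rank/trace argument (or the characteristic polynomial), which shows that all eigenvalues are $0$, still consistent with the claimed list since then $p^\top q = 0$. Thus the "hard part" is merely the book-keeping of this subcase, which the rank/trace phrasing handles uniformly.
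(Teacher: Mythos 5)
Your proof is correct: the trace computation, the observation that $\{v : q^\top v = 0\}$ gives $0$ geometric (hence algebraic) multiplicity at least $n-1$, and the trace argument identifying the remaining eigenvalue as $p^\top q$ are all sound, and you rightly handle the nilpotent case $p^\top q = 0$ where $p$ is no longer an eigenvector for a nonzero eigenvalue. There is nothing to compare against in the paper, which explicitly states this lemma \emph{without proof}; your rank/trace argument (or equivalently the characteristic polynomial $\lambda^{n-1}(\lambda - p^\top q)$) is the standard justification and fills that omission completely.
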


Then we introduce Lemma~\ref{lem:rank1unsquare}.

\begin{lemma}
A rank-$1$ real matrix (not necessarily square) $M=uv^\top$ has only one nonzero singular value $\sigma_{\max} (M)  = ||u||_2 \cdot ||v||_2$, where $||\cdot||_2$ is the $\ell_2$-norm of a vector, and the Frobenius norm and the trace norm of $M$ are identical, i.e.,
\begin{align}
||M||{_*} = ||M||{_F} = \sigma_{\max} (M)  = ||u||_2 \cdot ||v||_2 .
\label{eq:equivalence}
\end{align}
\label{lem:rank1unsquare}
\end{lemma}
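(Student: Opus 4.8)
The plan is to reduce the claim to the rank-1 eigenvalue computation already recorded in Lemma~\ref{lem:rank1}. First I would use the fact that the singular values of $M = uv^\top$ are the nonnegative square roots of the eigenvalues of the Gram matrix $M^\top M$. A direct computation gives
\begin{align}
M^\top M = (uv^\top)^\top (uv^\top) = v\,(u^\top u)\,v^\top = ||u||_2^2 \, vv^\top ,
\end{align}
which is again a symmetric rank-1 matrix. Applying Lemma~\ref{lem:rank1} to $vv^\top$ (take $p = q = v$), its only nonzero eigenvalue is $v^\top v = ||v||_2^2$, so the only nonzero eigenvalue of $M^\top M$ is $||u||_2^2\,||v||_2^2$. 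Taking square roots, $M$ has the single nonzero singular value $\sigma_{\max}(M) = ||u||_2 \cdot ||v||_2$, and every remaining singular value is $0$.

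Next I would read off the two matrix norms from this spectrum. Since $M$ has exactly one nonzero singular value $\sigma_{\max}(M)$, the Frobenius norm $||M||_F = \big(\sum_i \sigma_i(M)^2\big)^{1/2}$ collapses to $\sigma_{\max}(M)$, and the trace norm (the sum of the singular values) $||M||_* = \sum_i \sigma_i(M)$ likewise collapses to $\sigma_{\max}(M)$. Hence $||M||_* = ||M||_F = \sigma_{\max}(M) = ||u||_2 \cdot ||v||_2$, which is Eq.~\eqref{eq:equivalence}. As an independent check of the Frobenius part, one can bypass the SVD entirely: $||M||_F^2 = \mathrm{Tr}(M^\top M) = ||u||_2^2\,\mathrm{Tr}(vv^\top) = ||u||_2^2\,||v||_2^2$ using the trace identity in Lemma~\ref{lem:rank1}.

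There is essentially no obstacle: the lemma is a bookkeeping consequence of Lemma~\ref{lem:rank1} together with the definitions of the Frobenius and trace norms in terms of singular values. The only point needing mild care is the non-square case — $u$ and $v$ may have different dimensions, so the argument should go through $M^\top M$ (or equally $MM^\top$) rather than attempting to diagonalize $M$ itself, and one notes that the zero singular values simply pad out to the full count $\min(\dim u, \dim v)$. Once that is handled, the identification of $||M||_*$, $||M||_F$, and $\sigma_{\max}(M)$ is immediate.
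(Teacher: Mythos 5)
Your proof is correct and follows essentially the same route as the paper: compute the eigenvalues of $M^\top M$ (the paper uses $M^H M$, identical for real matrices), invoke Lemma~\ref{lem:rank1} on the rank-$1$ matrix $vv^\top$ to obtain the single nonzero singular value $\|u\|_2\,\|v\|_2$, and then read off the Frobenius and trace norms from the singular spectrum. Your version is slightly more explicit in spelling out why both norms collapse to $\sigma_{\max}(M)$ and in handling the non-square case, but the argument is the same.
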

\begin{proof}
We use $M^H$ to represent the conjugate transpose of the $M$ matrix, and $\lambda (\cdot)$ to represent the eigenvalues of a square matrix.
Then we have
\begin{align}
\lambda ({M^{{H}}}M) 
= \lambda (v{u^ \top }u{v^ \top }) 
= ({u^ \top }u)\lambda (v{v^ \top }) 
\end{align}
From Lemma~\ref{lem:rank1}, we know that $\lambda (vv^\top)$ are $\{ {v^ \top }v,0,0, \cdots \}$,
and thus $M$ has only one nonzero singular value ${\sigma _{\max }}(M)$:
\begin{align}
{\sigma_{\max } }(M) 
&= \sqrt {\lambda ({M^{{H}}}M)} 
= \sqrt {{\lambda }(v{u^ \top }u{v^ \top })}  \\
&= \sqrt {({u^ \top }u){\lambda}(v{v^ \top })} 
= \sqrt {({u^ \top }u)({v^ \top }v)}   \\
&= ||u||_2 \cdot ||v||_2,
\end{align}
and all other singular values of $M$ are $0$. Thus $||M||_* = ||M||_F = ||u||_2 \cdot ||v||_2$, which completes the proof.
\end{proof}

Based on Lemma~\ref{lem:rank1unsquare}, we now show the derivation of Eq.~\eqref{eq:omega-i}.
To tackle the following trace norm minimization formulation,
\begin{align}
{\omega _i} = \arg \mathop {\min }\limits_\omega  ||\omega {\Delta \phi _i} \phi _i^ \top  - {\phi _i}\phi _i^ \top ||{_*},
\label{eq:tracemin}
\end{align}
we need to use the structure of the rank-$1$ matrices. We have
\begin{align}
\omega {\Delta \phi _i} \phi _i^ \top  - {\phi _i}\phi _i^ \top  = {(\omega \Delta {\phi _i} - {\phi _i}) {\phi _i}^ \top },
\end{align}
we denote $q_i(\omega) := {(\omega \Delta {\phi _i} - {\phi _i}) }$, and thus we have
\begin{align}
\nonumber
\omega_i &=
\arg \mathop {\min }\limits_\omega  ||{q_i}(\omega ){\phi _i ^\top}||{_*}\\
\nonumber 
&= \arg \mathop {\min }\limits_\omega  ||{\phi _i }||_2 \cdot ||q_i(\omega)||_2 \\
&= \arg \mathop {\min }\limits_\omega  ||q_i(\omega)||_2 .
\label{eq:traceform}
\end{align}
The second equality above comes from Eq.~\eqref{eq:equivalence}, and the third equality from the fact that $||{\phi _i}||_2$ does not depend on $\omega$.

On the other hand, using $||\cdot||^2_F$ instead of trace norm in Eq.~\eqref{eq:tracemin}, we have
\begin{align}
{\omega _i} &=\arg \mathop {\min }\limits_\omega  ||{q_i}(\omega ){\phi _i ^\top}||_F^2,
\label{eq:fromin}
\\
\text{with }
||{q_i}(\omega ){\phi _i ^\top}||_F^2 
\nonumber 
&= {{Tr}}({\phi _i}{q^ \top _i}(\omega ) {q_i}(\omega ){\phi ^ \top _i })\\ 
\nonumber 
&= ({\phi ^ \top _i}{\phi _i}){{Tr}}({q_i}(\omega ){q^ \top _i}(\omega ))\\
\nonumber 
&= ({\phi ^ \top _i}{\phi _i})({q^ \top _i}(\omega ){q_i}(\omega ))\\
&=  ||{\phi _i}||_2^2||{q_i}(\omega )||_2^2.
\end{align}
The first equality comes from the fact that 
$
||M||_F^2 = {{Tr}}({M^H}M).
$
The third equality comes from Lemma~\ref{lem:rank1}.
Then we can see that Eq.~\eqref{eq:fromin} is equivalent to Eq.~\eqref{eq:traceform}, as verified by Lemma~\ref{lem:rank1unsquare}. 
So both trace norm and Frobenius norm minimizations are equivalent to
\begin{align}
\omega_i = \arg \mathop {\min }\limits_\omega  || \omega \Delta {\phi _i} - {\phi _i}||^2_2
\label{eq:tr}
\end{align}
By zeroing the gradient of the right hand-side of Eq.~\eqref{eq:tr}, we will have Eq.~\eqref{eq:omega-i} as the final result, which is also the vector projection weight of $\phi_i$ projected onto $\Delta\phi_i$.

For the on-policy case, the update rule is now ready as ${\theta _{i + 1}} = {\theta _{i}} + \alpha_i {\omega _i}{\delta _i}{\phi _i}$, where $\alpha_i \in (0,1]$ is the stepsize. 
For the off-policy case, importance weights ${\rho _i} = \frac{{\pi ({a_i}|{s_i})}}{{{\pi _b}({a_i}|{s_i})}}$ is used to enable the algorithm to take into consideration the discrepancies between the behavior policy $\pi$ and the target policy $\pi_b$ by properly weighing the observation, which is a standard way in off-policy learning~\citep{dann2014tdsurvey,geist2014off:trace:jmlr}:
%
\begin{align}
&\mathbb{E}_{\pi} \big[\delta_i \omega_i \phi_i \big] \nonumber \\ 
=& \sum_{s_{i+1}} \sum_{a_i} \sum_{s_i} P(s_i, a_i, s_{i+1})\delta_i \omega_i \phi_i \nonumber \\ 
\approx & \sum_{s_{i+1}} \sum_{a_i} \sum_{s_i} P(s_{i+1}|s_i, a_i) \pi(a_i|s_i) \xi(s_i) \delta_i \omega_i \phi_i \nonumber \\ 
= & \sum_{s_{i+1}} \sum_{a_i} \sum_{s_i} P(s_{i+1}|s_i, a_i) \pi_b(a_i|s_i) \xi(s_i) \frac{\pi(a_i|s_i)}{\pi_b(a_i|s_i)}\delta_i \omega_i \phi_i \nonumber \\
= & \mathbb{E}_{\pi_b} \big[\rho_i \delta_i \omega_i \phi_i \big].
\end{align}
The update rule is thus defined as,
\begin{align}
{\theta _{i + 1}} = {\theta _{i}} + \alpha_i \rho _i{\omega _i}{\delta _i}{\phi _i},
\label{eq:theta-diag}
\end{align}
where $\alpha_i \in (0,1]$ is the stepsize.
The resulting \textit{Stable and Efficient TD Algorithm} (\textbf{SETD}) is in Algorithm~\ref{alg:o2td}. 
\begin{algorithm}
\caption{Stable and Efficient TD Algorithm (SETD)}
\label{alg:o2td}
\begin{algorithmic}[1]
\STATE INPUT: Sample set $\{ {\phi _i},{r_i},{\phi _i}^\prime \} _{i = 1}^n$
\FOR {$i=1,\ldots,n$}
\STATE Compute $\phi_i, \Delta \phi_i$, $\delta_i=r_i+\gamma\phi_i^{'\top}\theta_i-\phi_i^\top\theta_i$.
\STATE Compute $\omega_i$ according to Eq.~\eqref{eq:omega-i}.
\STATE Compute $\theta_{i+1}$ according to Eq.~\eqref{eq:theta-diag}.
\ENDFOR
\end{algorithmic}
\end{algorithm}
The computational cost per step is $O(d)$, as can be seen from the computation of Eq.~\eqref{eq:omega-i} and \eqref{eq:theta-diag}.
\subsection{Extension to Eligibility Traces}
Here we extend SETD to eligibility traces.
First, we introduce the general $\lambda$-return with bootstrapping and discounting based on the importance-weighting factor by using the TD forward view:
\begin{align}
    G_t^{\lambda \rho}(V) = \rho_t \Big(r_{t+1} + \gamma \big[(1- \lambda)V(S_{t+1}) + \lambda G_{t+1}^{\lambda \rho} \big] \Big),
\end{align}
and define the value function at $s$ for a given policy $\pi$:  
\begin{align}
V_{\pi}(s) = \mathbb{E}\big[G_t^{\lambda \rho}(V_{\pi})|S_t =s, \pi \big] = {T_\pi^{\lambda \rho}} V_{\pi}(s) ,
\end{align}
where $\lambda \in [0,1]$ is the bootstrapping parameter and $T_\pi^{\lambda \rho}$ is the $\lambda$-weighted Bellman operator for policy $\pi$. Using linear function approximation, we get the TD equation
\begin{align}
b - A \theta &= \Phi ^\top \Xi \Omega_S R - \Phi ^\top \Xi \Omega_S \Lambda \theta \nonumber \\
& = \Phi ^\top \Xi \Omega_S ({T_\pi^{\lambda \rho}}V_{\theta} -  V_{\theta}).
\end{align}
Define $\delta_t^{\lambda \rho}(\theta) := G_t^{\lambda \rho}(\theta) - \phi_t^\top \theta$ and 
\begin{align}
P_{\xi}^{\pi}\delta_t^{\lambda \rho}(\theta) \omega_t \phi_t := \sum_s \xi(s) \mathbb{E} \big[\delta_t^{\lambda \rho}(\theta)| S_t=s,\pi \big]\omega_t \phi_t ,
\end{align}
where $P_{\xi}^{\pi}$ is an operator.
We also have:
\begin{align}
\mathbb{E} \big[\delta_t^{\lambda \rho}(\theta)| S_t=s,\pi \big] &= T_{\pi }^{\lambda \rho}V_{\theta}(s) -  V_{\theta}(s) \\
P_{\xi}^{\pi}\delta_t^{\lambda \rho}(\theta) \omega_t \phi_t & = \sum_s \xi(s) \mathbb{E} \big[\delta_t^{\lambda \rho}(\theta)| S_t=s, \pi \big]\omega_t \phi_t  \nonumber \\
&=\sum_s \xi(s) \big[T_{\pi} ^{\lambda \rho} V_{\theta}(s) -  V_{\theta}(s) \big]\omega_t \phi_t  \nonumber \\
& = \Phi ^\top \Xi \Omega_S (T_\pi^{\lambda \rho} V_{\theta} -  V_{\theta}).
\end{align}
Therefore, we have $P_{\xi}^{\pi}\delta_t^{\lambda \rho}(\theta) \omega_t \phi_t = \mathbb{E} \big[\delta_t^{\lambda \rho}(\theta)\omega_t \phi_t \big]$.

Consider the following identities:
\begin{align}
\delta_t^{\lambda \rho}(\theta) 
= &G_t^{\lambda \rho}(\theta) - \phi_t^\top \theta  \nonumber \\
 = & \rho_t \Big(r_{t+1} + \gamma \big[(1- \lambda)\phi_{t+1}^\top \theta + \lambda G_{t+1}^{\lambda \rho} \big] \Big) - \phi_t^\top \theta  \nonumber \\ 
 = & \rho_t \big(r_{t+1} + \gamma \phi_{t+1}^\top \theta -  \phi_t^\top \theta + \phi_t^\top \theta \big) \nonumber \\
 & - \rho_t \gamma \lambda \phi_{t+1}^\top \theta + \rho_t \gamma \lambda  G_{t+1}^{\lambda \rho} - \phi_t^\top \theta  \nonumber \\ 
 = &\rho_t \big(r_{t+1} + \gamma \phi_{t+1}^\top \theta -  \phi_t^\top \theta \big) \nonumber \\ 
&+ \rho_t \gamma \lambda \big(G_{t+1}^{\lambda \rho}-\phi_{t+1}^\top \theta \big)  +  \rho_t \phi_t^\top \theta- \phi_t^\top \theta  \nonumber \\ 
=&  \rho_t \delta_t(\theta) +  \rho_t \gamma \lambda \delta_{t+1}^{\lambda \rho}(\theta) +(\rho_t -1)\phi_t^\top \theta,
\end{align}
and
\begin{align}
& \mathbb{E} \big[(\rho_t -1)\phi_t^\top \theta \big]  \nonumber \\
&= \sum_s \xi(s)  \sum_a \pi_b(a|s)(\rho_t -1)\phi_t^\top \theta \nonumber \\ 
& = \sum_s \xi(s) \Big(\sum_a \pi(a|s) -\sum_a \pi_b(a|s) \Big) \phi_t^\top \theta = 0.
\end{align}
Hence, we can get:
\begin{align}
b - A \theta  
=& \Phi ^\top \Xi \Omega_S (T_{\pi}^{\lambda \rho}V_{\theta} -  V_{\theta}) 
=P_{\xi}^{\pi}\delta_t^{\lambda \rho}(\theta) \omega_t \phi_t \nonumber \\
=& \mathbb{E} \big[\delta_t^{\lambda \rho}(\theta)\omega_t \phi_t \big] \nonumber \\ 
= & \mathbb{E} \Big[\rho_t \delta_t(\theta)\omega_t \phi_t +  \rho_t \gamma \lambda \delta_{t+1}^{\lambda \rho}(\theta)\omega_t \phi_t  \nonumber \\ 
&+(\rho_t -1)(\phi_t^\top \theta) \omega_t \phi_t \Big] \nonumber \\ 
=& \mathbb{E} \Big[\rho_t \delta_t(\theta)\omega_t \phi_t +  \rho_t \gamma \lambda \delta_{t+1}^{\lambda \rho}(\theta)\omega_t \phi_t \Big] \nonumber \\ 
=& \mathbb{E} \Big[\rho_t \delta_t(\theta)\omega_t \phi_t +  \rho_{t-1} \gamma \lambda \delta_{t}^{\lambda \rho}(\theta)\omega_{t-1} \phi_{t-1} \Big] \nonumber \\ 
=& \mathbb{E} \Big[\rho_t \delta_t(\theta)\omega_t \phi_t +  \rho_{t-1} \gamma \lambda \big(\rho_t \delta_t(\theta)  \nonumber \\ 
&+  \rho_t \gamma \lambda \delta_{t+1}^{\lambda \rho}(\theta) +(\rho_t -1)\phi_t^\top \theta \big)\omega_{t-1} \phi_{t-1} \Big] \nonumber \\ 
=& \mathbb{E} \Big[\rho_t \delta_t(\theta) \big(\omega_t \phi_t +\rho_{t-1} \gamma \lambda \omega_{t-1} \phi_{t-1}  \nonumber \\ 
& + \rho_{t-2} \rho_{t-1} \gamma^2 \lambda^2 \omega_{t-2} \phi_{t-2} + \cdots \big)  \Big] \nonumber \\ 
=& \mathbb{E} \Big[ \delta_t(\theta) e_t \Big],
\end{align}
where the eligibility trace vector is defined as $e_t = \rho_t (\omega_t \phi_t + \gamma \lambda e_{t-1})$.
So we can now specify our final new algorithm, SETD($\lambda$), by the following steps, for $t > 0$:
\begin{align}
w_t &= \max\bigg(\frac{\Delta \phi_t ^\top \phi_t}{||\Delta \phi_t ||^2}, 0 \bigg),
\label{eq:w_trace}
\end{align}
\begin{align}
e_{t} &= \rho_t(\lambda \gamma e_{t-1} + w_t \phi_t), \\
\theta_{t+1} &= \theta_t + \alpha \delta_t e_t ,
\end{align}
where $ e_{0} = \vec{0}$. It is shown in Algorithm~\ref{alg:o2tdtrace}.

\begin{algorithm}
\caption{SETD($\lambda$)}
\label{alg:o2tdtrace}
\begin{algorithmic}[1]
\STATE INPUT: Sample set $\{ {\phi _t},{r_t},{\phi _t}^\prime \} _{t = 1}^n$
\STATE Initialize $e_{0} = {\vec 0}$.
\FOR {$t=1,\ldots,n$}
\STATE Compute $\phi_t, \Delta \phi_t$, $\delta_t=r_t+\gamma\phi_t^{'\top}\theta_t-\phi_t^\top\theta_t$.
\STATE Compute $\omega_t$ according to Eq.~\eqref{eq:w_trace}.
\STATE Compute $e_{t} = \rho_t(\lambda \gamma e_{t-1} + \omega_t \phi_t)$.
\STATE Compute $\theta_{t+1} = \theta_t + \alpha_t \delta_t e_t$.
\ENDFOR
\end{algorithmic}
\end{algorithm}



\section{Related Work}
\label{sec:related}

This section presents the related work, which is primarily the emphatic TD method (ETD) algorithm~\citep{etd:sutton2015} and the Retrace($\lambda$) algorithm~\citep{saferl:munos2016safe}.

The Retrace($\lambda$) algorithm shares the same motivation with the SETD algorithm, i.e, off-policy stability, and on-policy efficiency. To this end, it uses a capped importance ratio technique, which is shown to be superior to the conventional importance sampling method and Tree backup method~\citep{precup:tb:2000}. This research direction is complementary to our research and has the potential to combine with the SETD method, which is left for future research due to space limitations.

To the best of our knowledge, the closest work to ours is the ETD~\citep{etd:sutton2015}. ETD has indeed a  weighted oblique projection structure similar to SETD, with  ${Y_E} = {\Omega _{E}}\Phi$. The $i^\text{th}$ diagonal element ${\Omega _{E}}(i,i)$ is computed as 
\begin{align}
{{\Omega _{E}}(0,0)} &= 1; \\
\nonumber
{{\Omega _{E}}(i,i)} &= 1 + \gamma {\rho _{i - 1}}{{{\Omega _{E}}(i-1,i-1)}}, \quad i>0.
\end{align}
and the ETD algorithm update law is
\begin{align}
\theta_{i+1} = \theta_i + \alpha_i {{\Omega _E}(i,i)} \rho_i \delta_i \phi_i.
\label{eq:theta_etd}
\end{align}

A more detailed explanation of the ETD algorithm from the oblique projection perspective is shown as follows. Similar to SETD, ETD also assumes that the weighted oblique projection $Y_E$  can be approximated by the product of a diagonal matrix (termed as $\Omega_E$)  and $\Phi$, i.e., $Y_E = \Omega_E\Phi$. Then a different technique is used based on the power series expansion, i.e., 
\begin{align}
{({L_\pi })^{ - 1}} = {(I - {\gamma P_\pi })^{ - 1}} = \sum\limits_{k = 0}^\infty  {{{({\gamma P_\pi })}^k}}. 
\end{align}
Then the power series expansion is used to compute $\Xi \Omega$ as a whole.  
Since the optimal oblique projection matrix is ${X^*} = {({L_{\pi } ^\top})^{ - 1}}\Xi \Phi $, it is evident that $\hat X_E = \Xi \Omega_E \Phi$ should be as close as possible to $X^*$, especially the diagonal elements. The diagonal elements of $\hat X_E$ are represented as a (column) vector $f$.
One conjecture is that for the diagonal matrix of $\hat{X_E}$, it is desired that
$
f = {({L_{\pi }^ \top})^{ - 1}}{\xi } .
$
By using the power series expansion, $f$ can be expanded as
\begin{align}
f &= {({L_{\pi }^ \top })^{ - 1}}{\xi } = (\sum\limits_{k = 0}^\infty  {{{(\gamma {P_{\pi }^\top})}^k}} ){\xi }\\
 &= (I + \gamma {P_{\pi }^\top} + {(\gamma {P_{\pi }^\top})^2} +  \cdots + {(\gamma {P_{\pi }^\top})^k} + \cdots){\xi }.
\end{align}
Readers familiar with the emphatic TD learning algorithm know that this is actually identical to Eq.~(13) in~\cite{etd:sutton2015}, where a scalar follow-on trace is computed as\footnote{We use subscript $\bullet_t$ to denote sequential samples, and subscription $\bullet_i$ to denote samples that are randomly sampled with replacement.}
\begin{align}
{{\Omega _{E}}(0,0)} &= 1; \nonumber \\
{{\Omega _{E}}(t,t)} &= 1 + \gamma {\rho _{t - 1}}{{{\Omega _{E}}(t-1,t-1)}}, \quad t>0,
\label{eq:followon}
\end{align}
with ${\Omega_E}(t,t)$ denoting the $t^\text{th}$ diagonal element of $\Omega _E$.
It turns out that 
\begin{align}
{f_i} = {\xi }(i)\mathop {\lim }\limits_{t \to \infty } \mathbb{E}[{\Omega _{E}(i,i)}|{S_t} = {s_i}],
\end{align}
which leads to the standard emphatic TD($0$) algorithm,
\begin{align}
\theta_{t+1} = \theta_t + \alpha_t \Omega _E(t,t) \rho_t \delta_t \phi_t.
\label{eq:theta_etd0}
\end{align}

Previous works~\citep{yu2015convergence,mahmood2017incremental} also associated  ETD  with oblique projection. This sheds a helpful light on understanding the family of the emphatic TD learning algorithms. However, the ETD algorithm requires the sequential sampling condition, i.e., $s'_i = s_{i+1}, \forall i>0$, which is not suitable for a set of samples collected from many episodes. This restriction is alleviated for the SETD algorithm.

\begin{figure}[h]
\centering
\includegraphics[width=.45\linewidth]{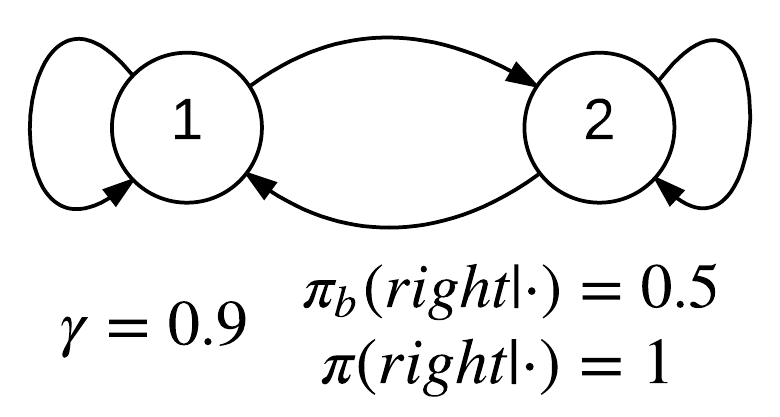}
\caption{Two-state MDP.}
\label{fig:two-states}
\end{figure}

We compare the two algorithms on the 2-state MDP of~\citep{etd:sutton2015}. 
As shown in Figure~\ref{fig:two-states}, this environment has two actions, left and right, which take the process to the left or right states. The single feature is $[1,2]^\top$ in the two states, and the discount factor $\gamma=0.9$. The behavior policy $\pi_{b}$ is to go left and right with equal probability from both states, while the target policy $\pi$ is to go right in both states.

Since $P_{\pi} = \begin{bmatrix} 
    0 & 1  \\
    0  & 1  
    \end{bmatrix}$,  we have
\begin{align}
L_{\pi} = I - \gamma P_{\pi} = \begin{bmatrix} 
    1 & -0.9  \\
    0  & 0.1  
    \end{bmatrix},
\end{align} 
and 
\begin{align}
X^{*} = (L_{\pi}^\top)^{-1} \Xi \Phi = \begin{bmatrix} 
    0.5  \\
    14.5  
    \end{bmatrix},
\end{align}
with  $X^{*}$ is the optimal oblique projection matrix and $\Xi = 0.5I$.

Next, we compute the oblique projection matrices, $X_{\Omega_E}$ and $X_{\Omega_S}$, for SETD and ETD, respectively. 

According to the definition of matrix $\Omega_E$ in ETD algorithm by \citep{etd:sutton2015}, let's use a (column) vector $f_{ETD}$  to represent the diagonal elements of matrix $\Omega_E$, it is calculated as:
\begin{align}
f_{ETD} = (I - \gamma P_{\pi}^\top)^{-1}\xi = \xi + \gamma P_{\pi}^\top \xi + (\gamma P_{\pi}^\top)^2 \xi + \cdots .
\end{align}
So $f_{ETD}(1) = 0.5$ and $f_{ETD}(2) = 0.5 + 0.9 + (0.9)^2 + (0.9)^3 + \cdots = 9.5$. Therefore we can get 
\begin{align}
\Omega_E = \begin{bmatrix} 
        0.5 & 0  \\
        0  & 0.95  
    \end{bmatrix}.
\label{eq:Omega_E}
\end{align}
Then we have 
\begin{align}
X_{\Omega_E} = \Xi \Omega_E \Phi = \begin{bmatrix} 
        0.25  \\
        9.5  
    \end{bmatrix}.
\label{eq:X_omega_E}
\end{align}

For SETD algorithm, each diagonal entry of $\Omega_S$ can be computed according to Eq.~\eqref{eq:omega-i}, which is 
\begin{align}
\Omega_S = \begin{bmatrix} 
        0 & 0  \\
        0  & 10  
    \end{bmatrix}.
\label{eq:Omega_S}
\end{align}
Then we have
\begin{align}
X_{\Omega_S} = \Xi \Omega_S \Phi = \begin{bmatrix} 
        0  \\
        10  
    \end{bmatrix}.
\label{eq:X_omega_S}
\end{align}
%
%
%
Here is a summary of comparing the SETD and ETD on the 2-state MDP domain, as shown in Table~\ref{tab:2state}. It should be noted that though TD will diverge on this domain, the solution to TD is the upper bound. As mentioned in Section~\ref{sec:weightedOP}, the weighted oblique projection structure for TD is $Y_{TD} = \Phi$. This is equivalent to 
$
Y_{TD} = \Omega_{TD} \Phi,
$
where 
\begin{align}
\Omega_{TD} = I.
\label{eq:Omega_TD}
\end{align}
Then we have 
\begin{align}
X_{TD} = \Xi \Omega_{TD} \Phi = \begin{bmatrix} 
        0.5  \\
        1  
    \end{bmatrix}
\label{eq:X_omega_TD}
\end{align} for this 2-state MDP domain.
Table~\ref{tab:2state} shows the comparison of SETD and ETD in the diagonalized approximation of $X^*$. For TD, 
\begin{align}
||{\Lambda ^ \top }\Xi \Omega \Phi  - C||{^2_F}  = ||\Lambda ^ \top \Xi \Phi - \Phi^\top \Xi \Phi||{^2_F} 
 = ||\gamma (\Phi'^\top\Xi\Phi)||^2_F.
\end{align}
According to the result, it can be seen that SETD performs better than ETD in $X^*$ estimation.

\begin{table}[h]
\caption{Comparison on the $2$-state MDP} 
\label{tab:2state}
\centering
\begin{tabular}{|c|c|c|c|c|}
\hline
{\bf Algorithm} & $\Omega$ & $||{\Lambda ^ \top }\Xi \Omega \Phi  - C||{^2_F}$ & X &{ $||X - {X^*}||_2$ } \\
\hline 
SETD   & Eq.\eqref{eq:Omega_S}  & \textbf{0.25} & Eq.\eqref{eq:X_omega_S} & \textbf{4.5277} \\
\hline
ETD    & Eq.\eqref{eq:Omega_E} & 0.64 & Eq.\eqref{eq:X_omega_E} & 5.0062 \\
\hline 
TD    & Eq.\eqref{eq:Omega_TD} & 7.29 & Eq.\eqref{eq:X_omega_TD} & 13.5 \\
\hline
\end{tabular}
\end{table}

\section{Experimental Study}
\label{sec:experimental}
This section evaluates the effectiveness of the proposed algorithms, comparing SETD with TD, GTD2, TDC (TD with gradient correction term), and ETD for on-policy learning and off-policy learning, respectively.
It should be mentioned that since the major focus of this paper is value function approximation, comparisons on control learning performance are not reported here. 
We use $\alpha_{{TD}}$, $\alpha_{{ETD}}$, $\alpha_{{SETD}}$, $\alpha_{{GTD2}}$, $\mu_{{GTD2}}$ ($\beta_{{GTD2}} = \alpha_{{GTD2}}*\mu_{{GTD2}}$) and $\alpha_{{TDC}}$, $\mu_{{TDC}}$ ($\beta_{{TDC}} = \alpha_{{TDC}}*\mu_{{TDC}}$) to denote the stepsizes for TD, ETD, SETD, GTD2, and TDC respectively. 
In order to focus on the algorithm itself and make the comparison fair, which is similar to~\citep{pgtd:adam2016,dann2014tdsurvey}, only constant stepsize is considered in this paper. All stepsizes are chosen via a range of parameters similar to ~\citep{dann2014tdsurvey} that are based on grid search method, as shown in Table~\ref{tab:gridsearch}. 

\begin{table}[h]
\caption{Considered values for grid-search.} 
\label{tab:gridsearch}
\centering
\begin{tabular}{ |c|c|} 
\hline
{\bf Parameter} & {\bf Evaluated Values} \\
\hline 
\multirow{4}{*}{$\alpha $}   & $1*10^{-7}, 1*10^{-6},1*10^{-5}, 1*10^{-4}, 0.001, 0.01, 0.1,$   \\
 & $3*10^{-6}, 5*10^{-6}, 7*10^{-6}, 9*10^{-6},2*10^{-6},$ \\
 & $ 2.5*10^{-6}, 2*10^{-4}, 4*10^{-4}, 6*10^{-4}, 8*10^{-4},$ \\
 & $0.002,...,0.009, 0.02,...,0.09, 0.2, 0.3, 0.4, 0.5, 0.6$ \\
\hline
$\mu$    & $1*10^{-4}, 1*10^{-3}, 0.01, 0.1, 1, 4, 8, 16, 0.005, 0.05, 0.5$ \\
\hline 
$\lambda$    & $0.4, 0.8$  \\
\hline
\end{tabular}
\end{table}

Two metrics, Root Mean-Squares Error (RMSE) and Root Mean-Squares Projected Bellman Error (RMSPBE)~\citep{tdc:2009}, are used as the performance measure:
\begin{align}
{\rm RMSE}  = \sqrt{||\hat{v} - V ||_{\Xi}^{2}}, \quad
{\rm RMSPBE}  = \sqrt{||\hat{v} - \Pi T \hat{v} ||_{\Xi}^{2}}.
\end{align}

\subsection{On-policy Comparison}

\begin{figure}[tbh]
\centering
\includegraphics[width=.47\textwidth,height=1.915in]{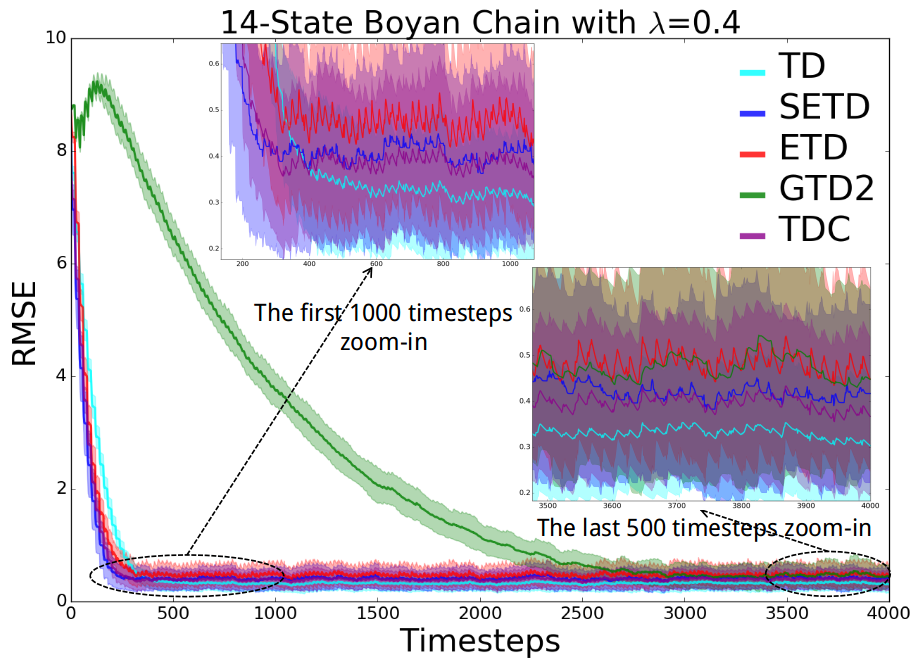}
\includegraphics[width=.47\textwidth,height=1.915in]{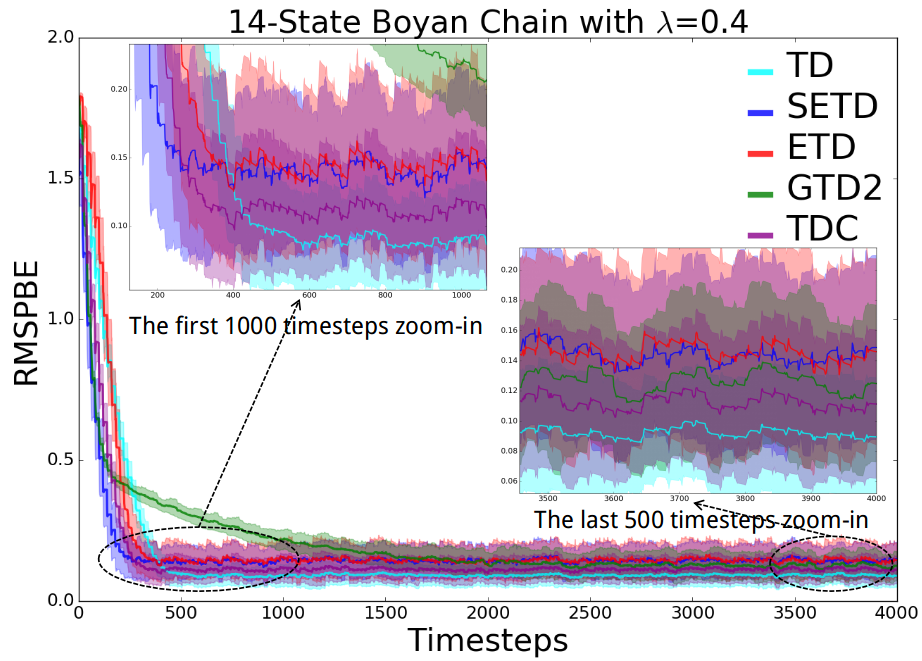}
\caption{14-state Boyan Chain MDP, $\lambda=0.4$.}
\label{fig:boyanchain-0.4}
\end{figure}

\begin{figure}[tbh]
\centering 
\includegraphics[width=.47\textwidth,height=1.915in]{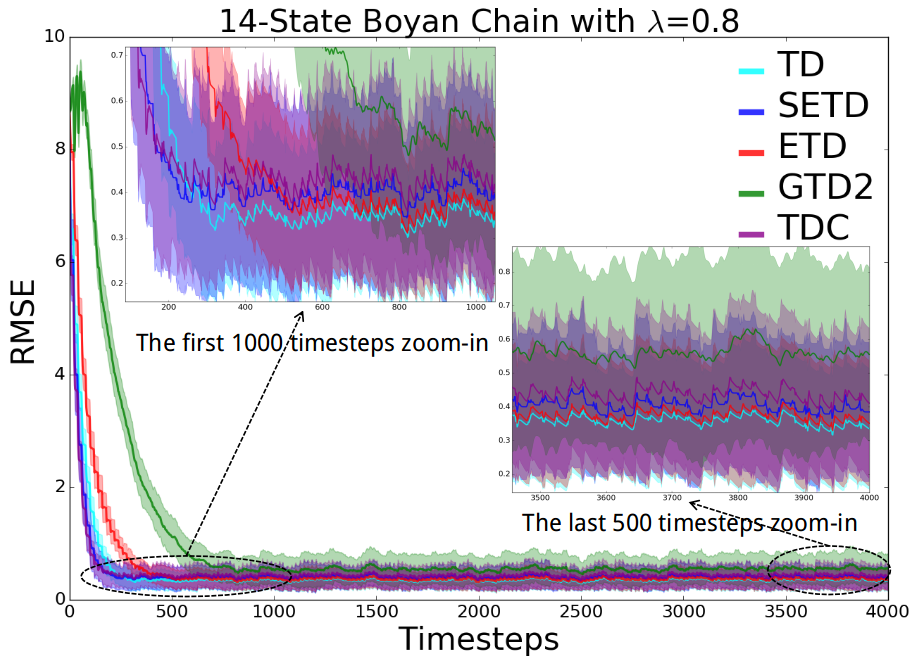}
\includegraphics[width=.47\textwidth,height=1.915in]{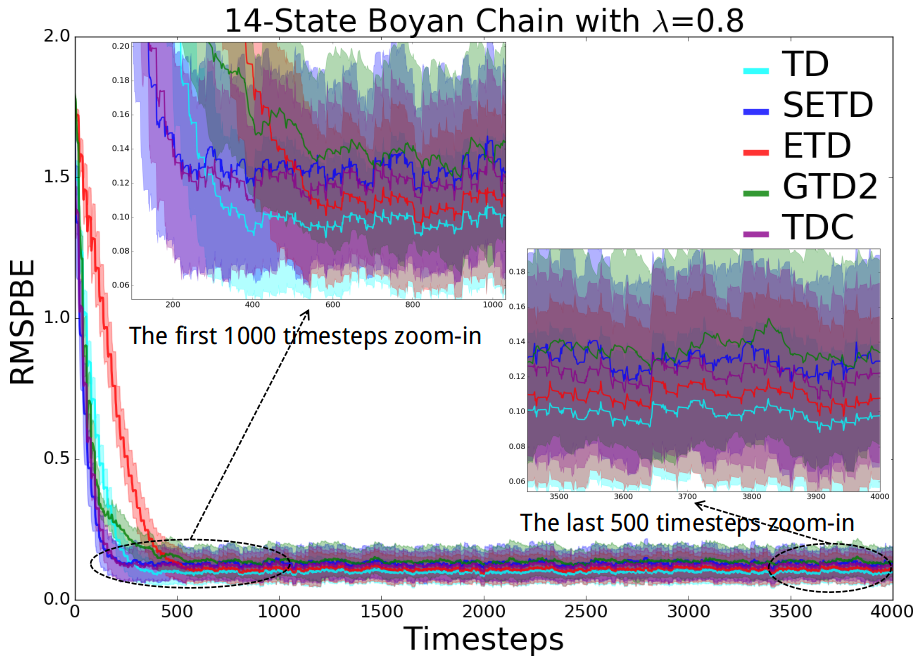}
\caption{14-state Boyan Chain MDP, $\lambda=0.8$.}
\label{fig:boyanchain-0.8}
\end{figure}

\subsubsection{Boyan Chain}
Comparison studies are conducted on the Boyan Chain MDP, which has 14 states and one action with a 4-dimensional state representation~\citep{boyan2002technical}. 
Algorithm 2 is compared with $\lambda = 0.4, 0.8$, as shown in Figure~\ref{fig:boyanchain-0.4} and \ref{fig:boyanchain-0.8} respectively.
To enable the visibility of details, we zoom in the first 1000 timesteps and the last 500 timesteps in the figures. For $\lambda=0.4$, constant stepsizes are $\alpha_{TD}=0.2$, $\alpha_{SETD}=0.4$, $\alpha_{ETD}=0.04$, $\alpha_{GTD2}=0.5$, $\mu_{GTD2}=1$,$\alpha_{TDC}=0.3$, $\mu_{TDC}=0.001$. For $\lambda=0.8$, constant stepsizes are $\alpha_{TD}=0.2$, $\alpha_{SETD}=0.3$, $\alpha_{ETD}=0.04$, $\alpha_{GTD2}=0.3$, $\mu_{GTD2}=1$,$\alpha_{TDC}=0.3$, $\mu_{TDC}=0.001$. 
 The learning curves are averaged over the results of $20$ runs.
Compared with all of the other approaches, SETD tends to have the fastest convergence speed and reaches similar steady-state performance on both RMSE and RMSPBE.

\begin{figure}
\centering
\includegraphics[width=.47\textwidth,height=1.92in]{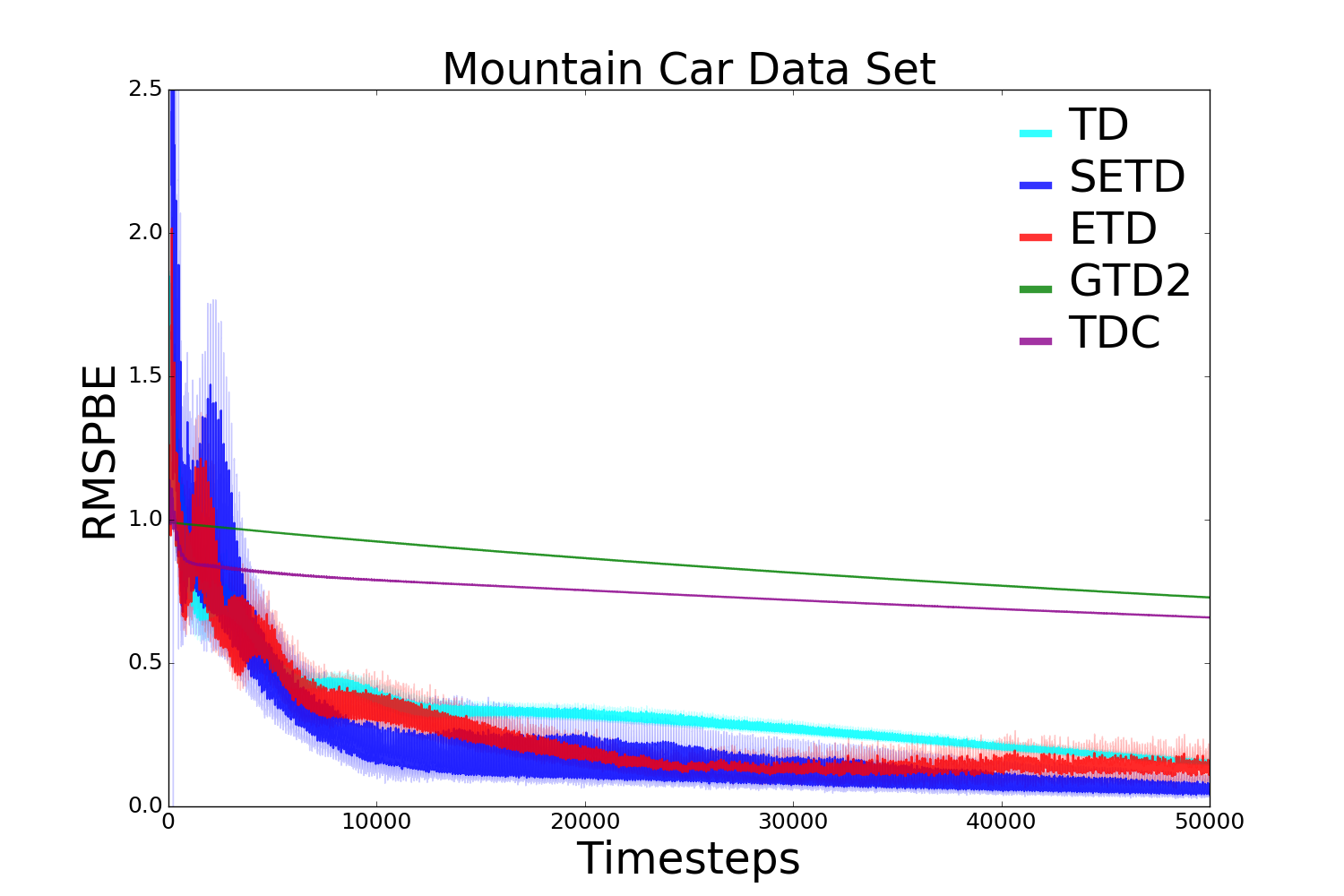}
\includegraphics[width=.47\textwidth,height=1.92in]{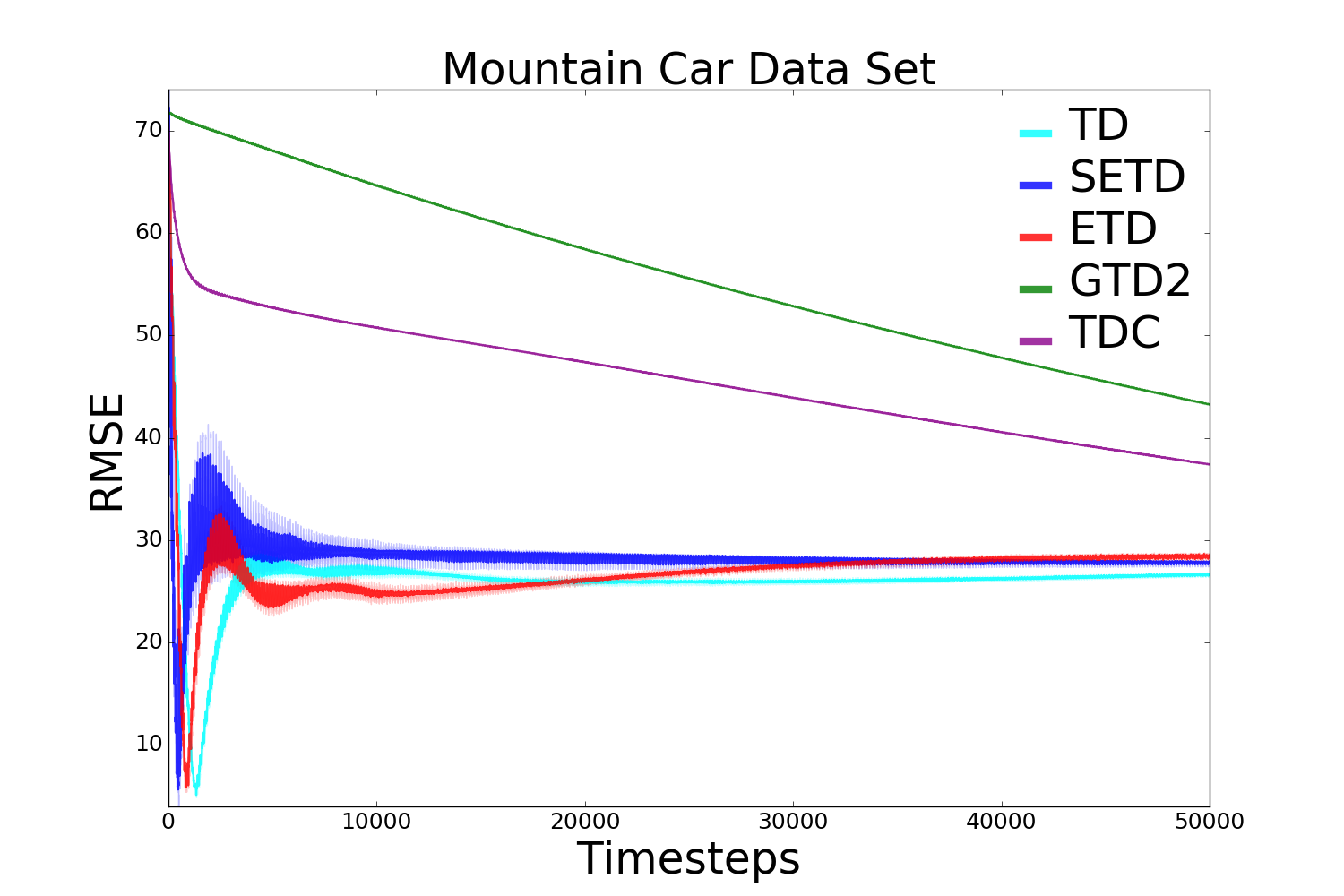}
\caption{Mountain Car with On-policy Task}
\label{fig:mcar-seq}
\end{figure}

\subsubsection{Mountain Car}
The mountain car problem is also used to evaluate the validity of SETD.
The mountain car MDP is an optimal control problem with a continuous two-dimensional state space. The steep discontinuity in the value function makes learning difficult. The Fourier basis \citep{konidaris:fourier} is used, which is a kind of fixed basis set.
In this experiment, an empirically good policy $\pi$ was first obtained, then we ran this policy $\pi$ to collect trajectories that comprise the dataset. On-policy policy evaluation of $\pi$ is then conducted using the collected samples.
The constant stepsizes are chosen as  $\alpha_{TD}=0.1$, $\alpha_{{ETD}} = 0.002$, $\alpha_{{SETD}} = 0.4$, $\alpha_{{GTD2}} = 0.05$, $\mu_{{GTD2}} = 1$, $\alpha_{{TDC}} = 0.04$, $\mu_{{TDC}} = 0.05$.
 The learning curves are averaged over the results of $20$ runs.
The Monte-Carlo estimation of $V$ is estimated via $100$ runs and each run has a maximum of  
$200$ time steps.

As Figure~\ref{fig:mcar-seq} shows, TD performs slightly better than SETD and ETD. TDC and GTD2 converge slowly in this domain.

\subsection{Off-policy Comparison}

\begin{figure}[tbh]
\centering
\includegraphics[width=.241\textwidth,height=1.2in]{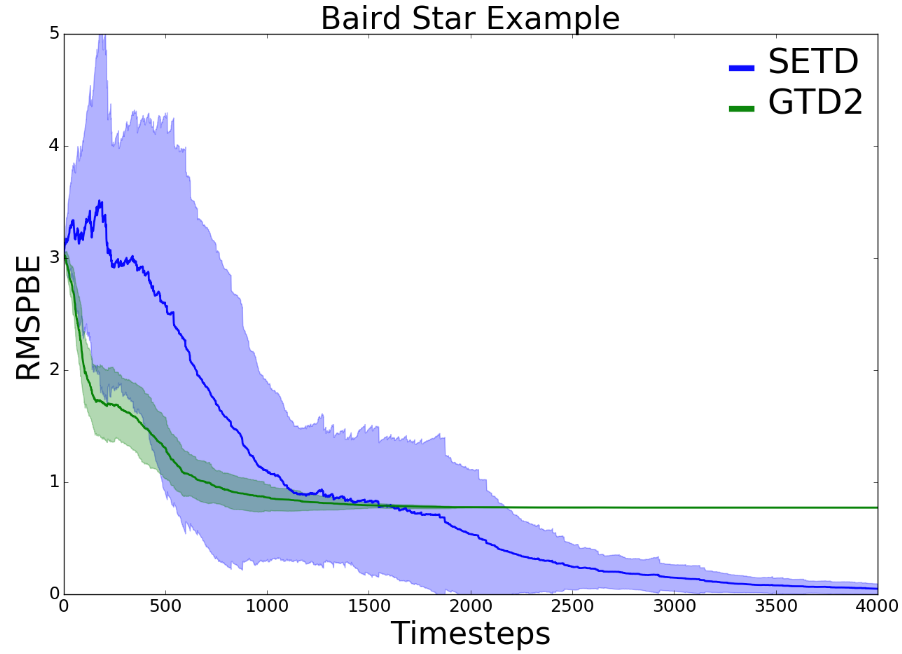}
\includegraphics[width=.241\textwidth,height=1.2in]{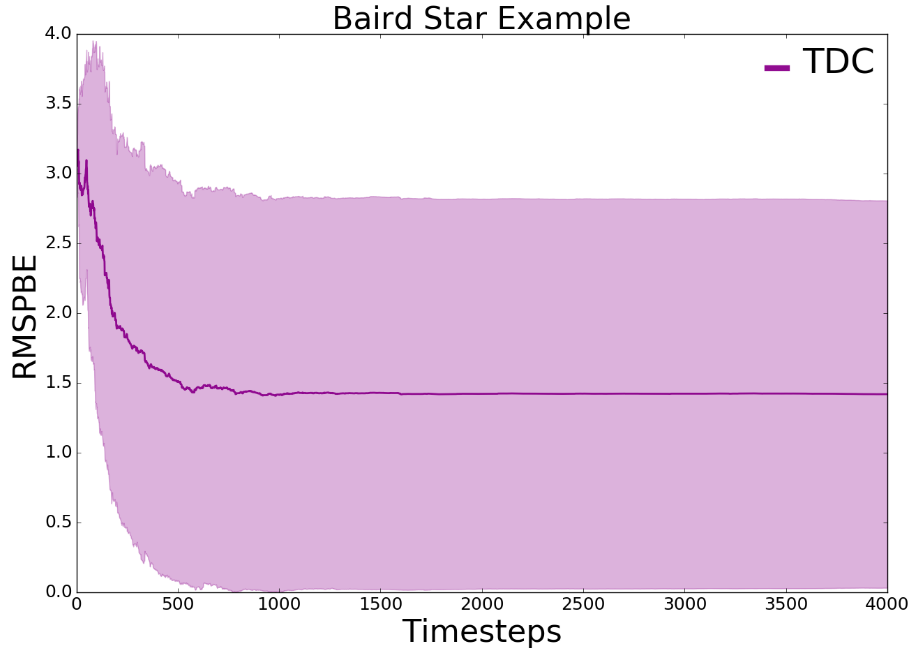}
\caption{Off-policy Baird Domain (RMSPBE)}
\label{fig:baird-mspbe}
\end{figure}

\begin{figure}[tbh]
\centering
\includegraphics[width=.241\textwidth,height=1.2in]{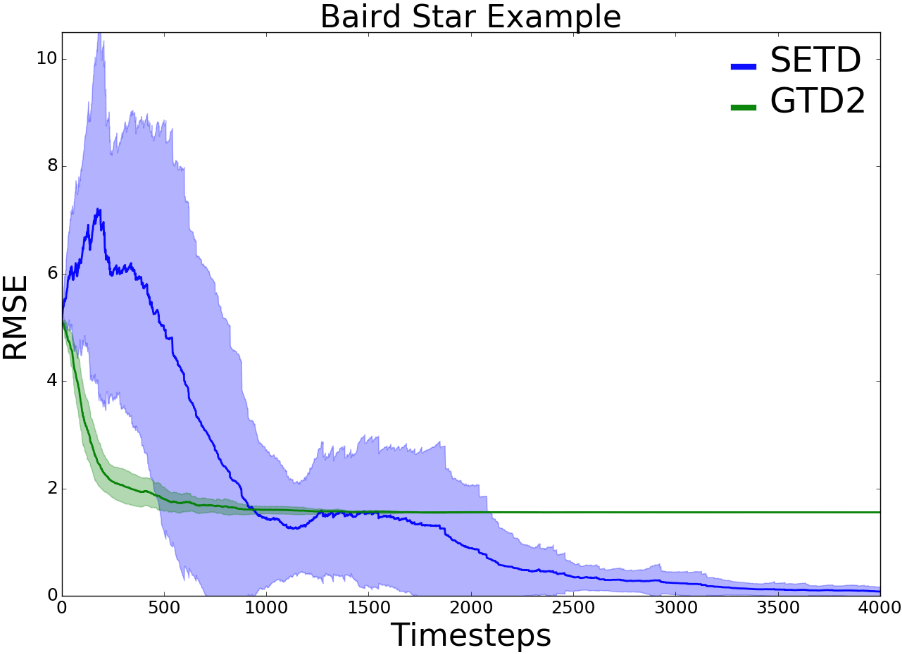}
\includegraphics[width=.241\textwidth,height=1.2in]{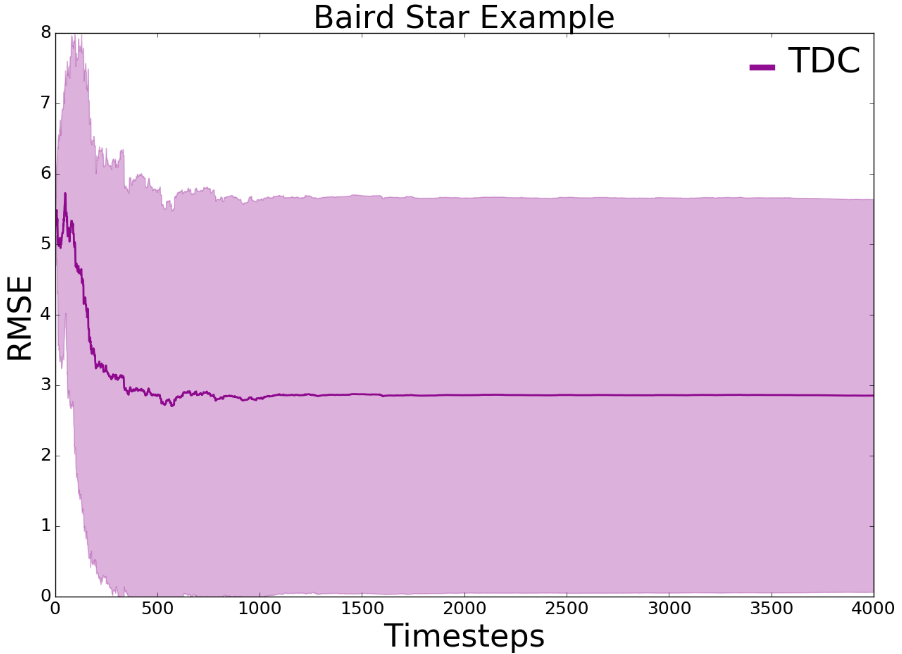}
\caption{Off-policy Baird Domain (RMSE)}
\label{fig:baird-mse}
\end{figure}

\subsubsection{Baird Domain}
The Baird example \citep{Baird:ResidualAlgorithms1995} is a well-known example to test the performance of off-policy convergent algorithms. 
Constant stepsizes $\alpha_{{SETD}} = 0.006$, $\alpha_{{GTD2}} = 0.005$, $\mu_{{GTD2}} = 1$, $\alpha_{{TDC}} = 0.006$, $\mu_{{TDC}} = 16$, which are chosen via comparison studies as in \citep{dann2014tdsurvey}.


Figure~\ref{fig:baird-mspbe} and Figure~\ref{fig:baird-mse} show the RMSPBE curve and RMSE curve of GTD2, SETD, TDC of $4000$ steps averaged over $20$ runs. 
TDC has the largest variance of all; although the variance of SETD is larger than GTD2's, SETD has a significant improvement over the GTD2 algorithm wherein the RMSPBE, the RMSE, and the variance are all substantially reduced.
The low variance of the GTD2 learning curve can be explained by the advantage of the stochastic gradient method~\cite{liu2015uai}.

\begin{figure}[tbh]
\centering
\includegraphics[width=.47\textwidth,height=1.91in]{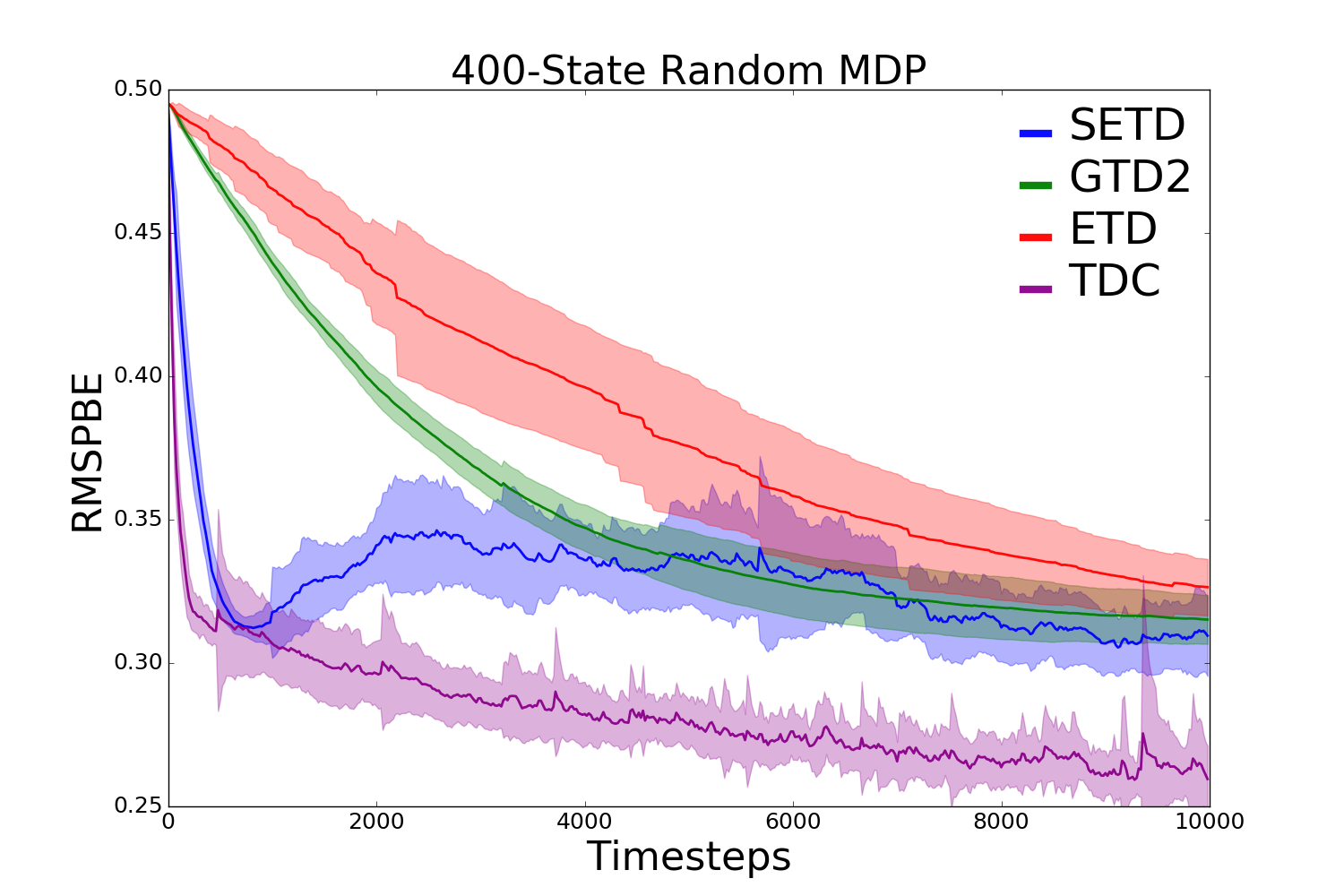}
\includegraphics[width=.47\textwidth,height=1.91in]{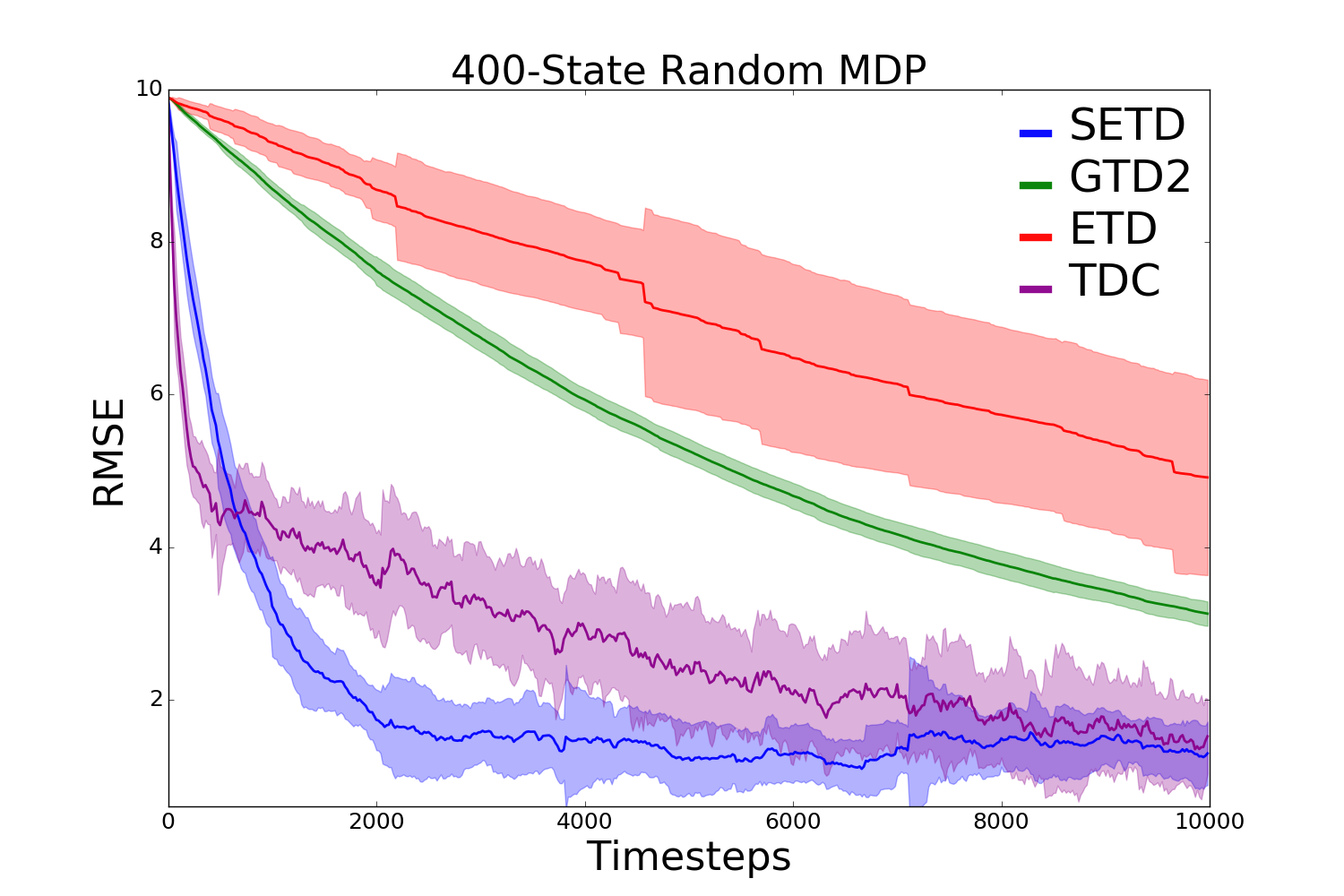}
\caption{Random MDP with Off-policy Task}
\label{fig:ran-seq}
\end{figure}

\subsubsection{$400$-State Random MDP}
%
This domain is a randomly generated MDP with $400$ states and $10$ actions \citep{dann2014tdsurvey}. The transition probabilities are defined as $P(s'|s,a) \propto p_{ss'}^a + {10^{ - 5}}$, where 
$p_{ss'}^a \sim U[0,1]$.
The behavior policy $\pi_b$, the target policy $\pi$ as well as the starting distribution, are sampled in a similar manner. 
Each state is represented by a $201$-dimensional feature vector, where the first $200$ features were sampled from a uniform distribution, and the last feature was a constant one, the discount factor is set to $\gamma = 0.95$.
The constant stepsizes are chosen as $\alpha_{{ETD}} = 2.5*10^{-6}$, $\alpha_{{SETD}} = 0.0008$, $\alpha_{{GTD2}} = 0.002$, $\mu_{{GTD2}} = 1$, $\alpha_{{TDC}} = 0.002$, $\mu_{{TDC}} = 0.05$.
Both the RMSE curve and RMSPBE curve are averaged over 20 runs, and each run has $10,000$ time steps.
%
ETD is very sensitive to stepsizes on this domain and tends to diverge with a large stepsize, thus makes the convergence very slow. 
As Figure~\ref{fig:ran-seq} shows, SETD performs better than GTD2 and ETD, although the variance is relatively larger than GTD2's. 

Overall, although SETD tends to have a relatively large variance in the initial phase, it is (1) off-policy convergent, (2) converging much faster than GTD2 and TDC in both on-policy and off-policy settings, and (3) less sensitive to stepsizes than ETD in the off-policy setting.


\section{Conclusion}

This paper addressed the question: \textit{How to design a policy evaluation algorithm that is both off-policy stable and on-policy efficient?}
Novel algorithms have been proposed, based on oblique projection. 
Empirical experimental studies showed the effectiveness of the proposed algorithms in different learning settings.

There are numerous promising future work potentials along this direction of research. One is to conduct a sample complexity analysis such as a high probability error bound. This is important because the performance of machine learning algorithms is always evaluated with a finite number of samples in real applications. Another interesting direction is to explore new approximation criteria. Current computationally tractable criteria of computing $X^*$ are based on Proposition~\ref{pro:fundamental} (as used in SETD) or on the power series expansion of $(L_\pi^\top)^{-1}\Xi\Phi$ (as used in ETD). It is intriguing to explore if there exists other computationally tractable criteria.
Finally, how to design a true-online SETD($\lambda$) algorithm and compare its performance with other true-online algorithms~\citep{pgtd:adam2016} such as true-online GTD($\lambda$) and true-online ETD($\lambda$) is also worth exploring.


\ifCLASSOPTIONcaptionsoff
  \newpage
\fi

\bibliographystyle{IEEEtran}
\bibliography{thesisbib}

\begin{IEEEbiography}[{\includegraphics[width=25mm,height=32mm,clip,keepaspectratio]{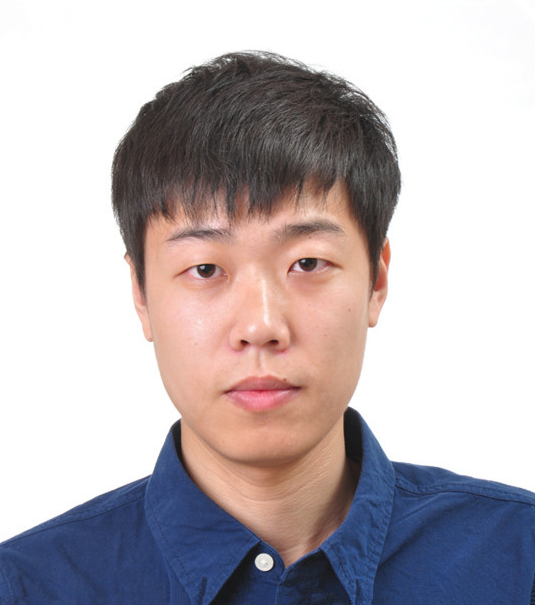}}]{Daoming Lyu}
received his B. S. degree in Electrical Engineering from Southwest University, and M. E. degree in Biomedical Engineering from Zhejiang University, China in 2011 and 2015, respectively. He is currently a Ph.D.
candidate in the Department of Computer Science and Software Engineering, Auburn University, USA. 
His research interest covers reinforcement learning, symbolic planning, healthcare informatics and artificial intelligence. His website is \url{http://www.auburn.edu/~dzl0053/}.
\end{IEEEbiography}

\begin{IEEEbiography}[{\includegraphics[width=25mm,height=32mm,clip,keepaspectratio]{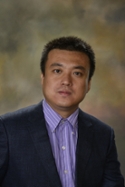}}]{Bo Liu} (M'18)
is a tenure-track assistant professor in Dept. of Computer Science and Software Engineering at Auburn University. He obtained his Ph.D. in University of Massachusetts Amherst, 2015. His primary research area covers machine learning, deep learning, healthcare informatics, stochastic optimization and their numerous applications to BIGDATA. In his current research, he has more than 30 publications on several notable venues, such as NIPS, UAI, AAAI, IJCAI, JAIR, IEEE TNNLS, ACM TECS, etc. He is the recipient of the UAI-2015 Facebook best student paper award. His website is \url{http://www.eng.auburn.edu/~bzl0056/}.
\end{IEEEbiography}

\begin{IEEEbiography}[{\includegraphics[width=25mm,height=32mm,clip,keepaspectratio]{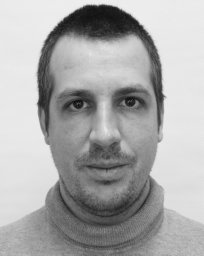}}]{Matthieu Geist}
obtained an Electrical Engineering degree and an MSc degree in Applied Mathematics in Sept. 2006 (Sup\'elec, France), a PhD degree in Applied Mathematics in Nov. 2009 (University Paul Verlaine of Metz, France) and a Habilitation degree in Feb. 2016 (University Lille 1, France). Between Feb. 2010 and Sept. 2017, he was an assistant professor at CentraleSup\'elec, France. In Sept. 2017, he joined University of Lorraine, France, as a full professor in Applied Mathematics (Interdisciplinary Laboratory for Continental Environments, CNRSL-UL). Since Sept. 2018, he is at Google Brain (Paris, France). His research interests include machine learning, especially reinforcement learning and imitation learning, as well as various applications.
\end{IEEEbiography}

\begin{IEEEbiography}[{\includegraphics[width=25mm,height=32mm,clip,keepaspectratio]{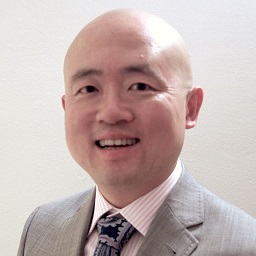}}]{Wen Dong}
is an Assistant Professor of Computer Science and Engineering at the State University of New York at Buffalo with a joint appointment in the Institute of Sustainable Transportation and Logistics. He focuses on modeling human interaction dynamics with stochastic process theory through combining the power of "big data" and the logic/reasoning power of agent-based models, to solve our societies' most challenging problems such as transportation sustainability and efficiency. Wen Dong holds a Ph.D. in Media Arts and Sciences from Massachusetts Institute of Technology.
\end{IEEEbiography}

\begin{IEEEbiography}[{\includegraphics[width=25mm,height=32mm,clip,keepaspectratio]{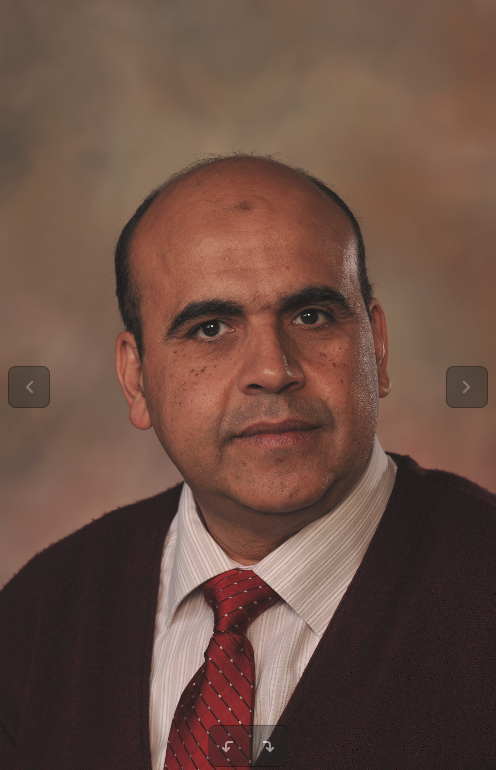}}]{Saad Biaz}
(M'98) received the Ph.D. degree in Electrical Engineering from the University Henri Poincar\'e, Nancy, France, in 1989, and the Ph.D. degree in Computer Science from Texas A\&M University, College Station, in 1999. He is presently a Professor of Computer Science and Software Engineering at Auburn University, Auburn, AL. He has held faculty positions at the Ecole Sup\'erieure de Technologie de F\`es and Al Akhawayn University, Ifrane, Morocco. His current research is in the areas of distributed systems, wireless networking, mobile computing, and unmanned flight. Dr. Biaz was a recipient of the Excellence Fulbright Scholarship in 1995. His research is funded by the National Science Foundation and the Department of Defense. He has served on the committees of several conferences and as reviewer for several journals. His website is \url{http://www.eng.auburn.edu/users/sbiaz}.
\end{IEEEbiography}
\begin{IEEEbiography}[{\includegraphics[width=25mm,height=32mm,clip,keepaspectratio]{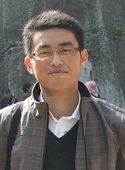}}]{Qi Wang}(M'15-SM'15) received the B.E. degree in automation and the Ph.D. degree in pattern recognition and intelligent systems from the University of Science and Technology of China, Hefei, China, in 2005  and 2010, respectively.  He is currently a Professor with the School of Computer Science, with the Unmanned System Research Institute, and with the Center for OPTical IMagery Analysis and Learning (OPTIMAL), Northwestern Polytechnical University, Xi'an, China. His research interests include computer vision and pattern recognition.
 
\end{IEEEbiography}

\end{document}